\title{Optimal Binary Autoencoding with Pairwise Correlations}
\author{Akshay Balsubramani 
\thanks{Most of the work was done as a PhD student at UC San Diego. Now reachable at: \texttt{abalsubr@stanford.edu}.} 
\\
%Department of Computer Science\\
%Cranberry-Lemon University\\
%Pittsburgh, PA 15213, USA \\
\texttt{abalsubr@ucsd.edu} 
%\\ \And
%Ji Q. Ren \& Yevgeny LeNet \\
%Department of Computational Neuroscience \\
%University of the Witwatersrand \\
%Joburg, South Africa \\
%\texttt{\{robot,net\}@wits.ac.za} \\
%\AND
%Coauthor \\
%Affiliation \\
%Address \\
%\texttt{email}
}
\newtheorem{theorem}{Theorem}%[section]
\newtheorem{proposition}[theorem]{Proposition}
\newtheorem{assumption}{Assumption}
\newcommand{\vB}{\mathbf{B}}
\newcommand{\vb}{\mathbf{b}}
\newcommand{\ve}{\mathbf{e}}
\newcommand{\vE}{\mathbf{E}}
\newcommand{\vx}{\mathbf{x}}
\newcommand{\vG}{\mathbf{G}}
\newcommand{\vR}{\mathbf{R}}
\newcommand{\vu}{\mathbf{u}}
\newcommand{\vU}{\mathbf{U}}
\newcommand{\vw}{\mathbf{w}}
\newcommand{\vW}{\mathbf{W}}
\newcommand{\vX}{\mathbf{X}}
\newcommand{\vzero}{\mathbf{0}}
\newcommand{\vone}{\mathbf{1}}
\newcommand{\vtildex}{\tilde{\mathbf{x}}}
\DeclareMathOperator*{\argmin}{arg\,min}
\DeclareMathOperator{\sgn}{sgn}
\newcommand{\RR}{\mathbb{R}}      % Real numbers
\newcommand{\vnorm}[1]{\left\lVert#1\right\rVert} % vector norm
\newcommand{\abs}[1]{\left| #1 \right|}
\newcommand{\encode}{\textsc{Enc}}
\newcommand{\algname}{\textsc{PC-AE}} 
\newcommand{\cL}{\mathcal{L}}
\newcommand{\cO}{\mathcal{O}}
\newcommand{\lrp}[1]{\left(#1\right)}
\newcommand{\lrb}[1]{\left[#1\right]}
\newcommand{\lrsetb}[1]{\left\{#1\right\}}
\begin{document}

\maketitle

\begin{abstract}
%Suppose we are given random samples of $V$-bit data to encode into an $H$-bit representation, 
%only allowed to remember the average pairwise correlations between the encoded and decoded bits,  
%and then tasked with reconstructing the decoded data from the encodings. 
%We show that among all such autoencoding schemes, 
%the lowest possible reconstruction error guarantee can be achieved by an efficiently learnable autoencoder with a single-layer decoder. 
%It can be learned by an alternating convex optimization method, in contrast to existing non-convex formulations, 
%with demonstrably superior empirical reconstruction performance? 
%The techniques used to derive these results are closely related to decision theory and ensemble aggregation. 
We formulate learning of a binary autoencoder as a biconvex optimization problem which learns from the pairwise correlations between encoded and decoded bits. Among all possible algorithms that use this information, ours finds the autoencoder that reconstructs its inputs with worst-case optimal loss. The optimal decoder is a single layer of artificial neurons, emerging entirely from the minimax loss minimization, and with weights learned by convex optimization. All this is reflected in competitive experimental results, demonstrating that binary autoencoding can be done efficiently by conveying information in pairwise correlations in an optimal fashion. 
%The framework and results extend to sparse and denoising autoencoders, a variety of reconstruction losses, and more. 
\end{abstract}

\section{Introduction}
\label{sec:intro}

Consider a general autoencoding scenario, in which an algorithm learns a compression scheme 
for independently, identically distributed (i.i.d.) $V$-dimensional bit vector data $\lrsetb{ \vx^{(1)}, \dots, \vx^{(n)} }$. 
For some encoding dimension $H$, 
the algorithm encodes each data example $\vx^{(i)} = (x_1^{(i)} , \dots, x_V^{(i)} )^\top$ into an $H$-dimensional representation $\ve^{(i)}$, 
with $H < V$. %for now. 
It then decodes each $\ve^{(i)}$ back into a reconstructed example $\vtildex^{(i)}$ using some small amount of additional memory, 
and is evaluated on the quality of the reconstruction by the cross-entropy loss commonly used to compare bit vectors. 
A good autoencoder learns to compress the data into $H$ bits so as to reconstruct it with low loss. 

When the loss is squared reconstruction error and the goal is to compress data in $\RR^V$ to $\RR^H$,  
this is often accomplished with principal component analysis (PCA), 
which projects the input data on the top $H$ eigenvectors of their covariance matrix (\cite{BK88, BH89}). 
These eigenvectors in $\RR^V$ constitute $VH$ real values of additional memory needed to decode the compressed data in $\RR^H$ 
back to the reconstructions in $\RR^V$, which are linear combinations of the eigenvectors. 
Crucially, this total additional memory does not depend on the amount of data $n$, making it applicable when data are abundant. 

%Performance on this task depends on what the algorithm is allowed to remember to aid with the decoding process, which should reflect the transmission of information from the data $\lrsetb{ \vx^{(1)}, \dots, \vx^{(n)} }$ to the encoding $E := \lrsetb{ \ve^{(1)}, \dots, \ve^{(n)} }$. 

This paper considers a similar problem, except using bit-vector data and the cross-entropy reconstruction loss. 
Since we are compressing samples of i.i.d. $V$-bit data into $H$-bit encodings, 
a natural approach is to remember the pairwise statistics: 
the $VH$ average \emph{correlations} between pairs of bits in the encoding and decoding, 
constituting as much additional memory as the eigenvectors used in PCA. 
The decoder uses these along with the $H$-bit encoded data, to produce $V$-bit reconstructions. 

We show how to efficiently learn the autoencoder with the worst-case optimal loss in this scenario, 
without any further assumptions, parametric or otherwise. 
It has some striking properties.

The decoding function is identical in form to the one used in a standard binary autoencoder with one hidden layer (\cite{BCV13}) and cross-entropy reconstruction loss. 
Specifically, each bit $v$ of the decoding is the output of a logistic sigmoid artificial neuron of the encoded bits, 
with some learned weights $\vw_v \in \RR^H$. 
This form emerges as the uniquely optimal decoding function, and is \emph{not} assumed as part of any explicit model.% in Section \ref{sec:optdecode}. 
%it operates on each $H$-bit encoded example $\ve^i$, using a learned parameter vector $\vw_v$ to compute each bit $v \in [V]$ of the reconstruction according to a separate logistic artificial neuron: $\tilde{x}_v^i = \sigma_{\log} \lrp{ \vw_v^\top \ve^i } $, where $\sigma_{\log}$ is the logistic sigmoid function. 
%\item The encoding function on any point $\vx$ is the solution to an $H$-dimensional convex optimization, involving that 

The worst-case optimal reconstruction loss suffered by the autoencoder is convex in %the encodings $\vE$ and in 
these decoding weights $\vW = \lrsetb{\vw_v}_{v \in [V]}$, and in the encoded representations $\vE$. 
Though it is not jointly convex in both, the situation still admits a natural and efficient optimization algorithm 
in which the loss is alternately minimized in $\vE$ and $\vW$ while the other is held fixed. 
The algorithm is practical, learning incrementally from minibatches of data in a stochastic optimization setting.

%\akshay{Comparison to PCA: each data point is encoded as a vector in $\RR^{H}$, 
%and given the $H$ singular vectors as side information, we can reconstruct it as a linear combination of those vectors.}

\subsection{Notation}
The decoded and encoded data can be written in matrix form, representing bits as $\pm 1$: 
\begin{align}
\label{eq:defofe}
\vX = 
 \begin{pmatrix}
   x_1^{(1)} & \cdots & x_1^{(n)} \\
   \vdots    & \ddots &  \vdots  \\
   x_V^{(1)}  & \cdots &  x_V^{(n)}
 \end{pmatrix}  \in [-1, 1]^{V \times n}
 \quad , \quad
\vE = 
 \begin{pmatrix}
   e_1^{(1)} & \cdots & e_1^{(n)} \\
   \vdots    & \ddots &  \vdots  \\
   e_H^{(1)}  & \cdots &  e_H^{(n)}
 \end{pmatrix}  \in [-1, 1]^{H \times n}
 \end{align}
Here the encodings are allowed to be randomized, 
represented by values in $[-1,1]$ instead of just the two values $\{ -1, 1\}$; 
e.g. $e_i^{(1)} = \frac{1}{2}$ is $+1\;\text{w.p.}\; \frac{3}{4} $ and $-1\; \text{w.p.}\; \frac{1}{4} $. 
The data in $\vX$ are also allowed to be randomized, which loses hardly any generality for reasons discussed later (\Cref{sec:datarandomized}). 
We write the columns of $\vX, \vE$ as $\vx^{(i)} , \ve^{(i)}$ for $i \in [n]$ (where $[s] := \lrsetb{1,\dots,s}$), 
representing the data. 
The rows are written as $\vx_v = ( x_v^{(1)} , \dots, x_v^{(n)} )^\top$ for $v \in [V]$ 
and $\ve_h = ( e_h^{(1)} , \dots, e_h^{(n)} )^\top$ for $h \in [H]$.

We also consider the correlation of each bit $h$ of the encoding with each decoded bit $v$ over the data, i.e. 
$b_{v,h} := \frac{1}{n} \sum_{i=1}^n x_v^{(i)} e_h^{(i)}$. 
This too can be written in matrix form as $\vB := \frac{1}{n} \vX \vE^\top \in \RR^{V \times H}$, 
%\begin{align}
%\label{eq:defofb}
%\frac{1}{n} \vX \vE^\top
%:= \vB = 
% \begin{pmatrix}
%   b_{1,1} & \cdots & b_{1,H} \\
%   \vdots    & \ddots &  \vdots  \\
%   b_{V,1}  & \cdots &  b_{V,H}
% \end{pmatrix}  \in \RR^{V \times H}
% \end{align}
whose rows and columns we respectively write as $\vb_v = ( b_{v,1} , \dots, b_{v,H} )^\top$ over $v \in [V]$
and $\vb_h = ( b_{1,h} , \dots, b_{V,h} )^\top$ over $h \in [H]$; the indexing will be clear from context. 

%Similarly, define
%\begin{align}
%\vW := 
% \begin{pmatrix}
%   w_{1,1} & \cdots & w_{1,V} \\
%   \vdots    & \ddots &  \vdots  \\
%   w_{H,1}  & \cdots &  w_{H,V}
% \end{pmatrix}  \in [-1, 1]^{H \times V}
% \end{align}
%}

As alluded to earlier, the loss incurred on example $i \in [n]$ is 
the cross-entropy between the example $\vx^{(i)}$
and its reconstruction $\vtildex^{(i)}$, in expectation over the randomness in $\vx^{(i)}$. 
Defining $\ell_{\pm} (\tilde{x}_v^{(i)} ) = \ln \lrp{\frac{2}{1 \pm \tilde{x}_v^{(i)} }}$ (the \emph{partial losses} to true labels $\pm 1$), 
the loss is written as:
\begin{align}
\label{eq:defxentloss}
\ell (\vx^{(i)} , \vtildex^{(i)} ) := \sum_{v=1}^V \lrb{ \lrp{\frac{1 + x_v^{(i)} }{2} } \ell_{+} (\tilde{x}_v^{(i)} ) + \lrp{\frac{1 - x_v^{(i)} }{2} } \ell_{-} (\tilde{x}_v^{(i)} ) }
\end{align}

In addition, define a \emph{potential well} 
$\Psi (m) := \ln \lrp{1 + e^{m}} + \ln \lrp{1 + e^{-m}}$ %(Fig. \ref{fig:log_potential}), 
with derivative $\Psi' (m) := \frac{1 - e^{-m}}{1 + e^{-m}}$. 
Univariate functions like this are applied componentwise to matrices in this paper. 

%\begin{figure}[tb]
%  \begin{center}
%    \includegraphics[width=0.3\textwidth]{log_potential.png}
%  \end{center}
%  \vspace{-1em}
%  \caption{Potential well $\Psi (m) = \ln (1 + e^m) + \ln (1 + e^{-m})$ as a function of score $m$.}
%  \label{fig:log_potential}
%\end{figure}

\subsection{Problem Setup}
\label{sec:setup}

With these definitions, the autoencoding problem we address can be precisely stated as two tasks, 
encoding and decoding. 
These share only the side information $\vB$. 
%\begin{framed}
%\vspace{-1em}
%\centerline{\underline{Problem: Autoencoding with Pairwise Correlations}}
%\paragraph{Encoding:} Given the data $\vX$, compute (possibly randomized) encodings $\vE$ and their pairwise bit correlations with the data $\vB = \frac{1}{n} \vX \vE^\top$. 
%\paragraph{Decoding:} Given only $\vE$ and the pairwise bit correlations $\vB$, reconstruct the data as $\lrsetb{ \vtildex^{(1)} , \dots, \vtildex^{(n)} }$, suffering loss $\frac{1}{n} \sum_{i=1}^n \ell (\vx^{(i)} , \vtildex^{(i)} ) $. 
%\end{framed}
Our goal is to perform these steps so as to achieve the best possible guarantee on reconstruction loss, with no further assumptions. 
This can be written as a zero-sum game of an autoencoding algorithm seeking to minimize loss against an adversary, 
by playing encodings and reconstructions: 

\begin{itemize}
\item
%\paragraph{Encoding:}
Using $\vX$, algorithm plays (randomized) encodings $\vE$, resulting in pairwise correlations $\vB$.
\item
%\paragraph{Decoding:}
Using $\vE$ and $\vB$, algorithm plays reconstructions $\tilde{\vX} = \lrp{ \vtildex^{(1)} ; \dots ; \vtildex^{(n)} } \in [-1, 1]^{V \times n}$.
\item
Given $\tilde{\vX}, \vE, \vB$, adversary plays $\vX$ to maximize reconstruction loss $\frac{1}{n} \sum_{i=1}^n \ell (\vx^{(i)} , \vtildex^{(i)} ) $. 
\end{itemize}

%\begin{align*}
%\cL^* := 
%\min_{\ve^{(1)}, \dots, \ve^{(n)} \in [-1,1]^V} \;
%\min_{\vtildex^{(1)}, \dots, \vtildex^{(n)} \in [-1,1]^V} \; 
%\max_{\substack{ \vx^{(1)}, \dots, \vx^{(n)} \in [-1,1]^V , \\ \forall v \in [V] : \;\frac{1}{n} \vE \vx_v = \vb_v }} 
%\; \frac{1}{n} \sum_{i=1}^n \ell (\vx^{(i)} , \vtildex^{(i)} ) 
%\end{align*}

We find the autoencoding algorithm's best strategy in two parts. 
First, we find the optimal decoding of any encodings $\vE$ given $\vB$, in Section \ref{sec:optdecode}. 
Then, we use the resulting optimal reconstruction function to outline the best encoding procedure, 
i.e. one that finds the $\vE, \vB$ that lead to the best reconstruction, in Section \ref{sec:optencode}. 
Combining these ideas yields an autoencoding algorithm in Section \ref{sec:altalg} (Algorithm \ref{alg:aerealalg}), 
where its implementation and interpretation are specified. 
Further discussion and related work in Section \ref{sec:relworkdisc} are followed by 
more extensions in Section \ref{sec:extensions} and experiments in Section \ref{sec:experiments}.

\section{Optimally Decoding an Encoded Representation}
\label{sec:optdecode}

To address the problem of Section \ref{sec:setup}, we first assume $\vE$ and $\vB$ are fixed, 
and derive the optimal decoding rule given this information. 
We show in this section that the form of this optimal decoder is precisely the same as in a classical autoencoder:  
having learned a weight vector $\vw_v \in \RR^{H}$ for each $v \in [V]$, 
the $v^{th}$ bit of each reconstruction $\vtildex^i$ is expressed 
as a logistic function of a $\vw_v$-weighted combination of the $H$ encoded bits $\ve^i$ -- a logistic artificial neuron with weights $\vw_v$. 
The weight vectors are learned by convex optimization, despite the nonconvexity of the transfer functions. 
% because the training algorithm is not backpropagation over the composition of the 
% This can be learned in parallel for each bit of the reconstruction. 

To develop this, we minimize the worst-case reconstruction error, 
where $\vX$ is constrained by our prior knowledge that $\vB = \frac{1}{n} \vX \vE^\top$, 
i.e. $\frac{1}{n} \vE \vx_v = \vb_v \;\;\forall v \in [V]$. 
This can be written as a function of $\vE$: 
\begin{align}
\label{eq:mmxloss}
\cL_{\vB}^* (\vE) := 
\min_{\vtildex^{(1)}, \dots, \vtildex^{(n)} \in [-1,1]^V} \; \max_{\substack{ \vx^{(1)}, \dots, \vx^{(n)} \in [-1,1]^V , \\ \forall v \in [V] : \;\frac{1}{n} \vE \vx_v = \vb_v }} 
\; \frac{1}{n} \sum_{i=1}^n \ell (\vx^{(i)} , \vtildex^{(i)} ) 
\end{align}

We solve this minimax problem for the optimal reconstructions played by the minimizing player in \eqref{eq:mmxloss}, 
written as $\vtildex^{(1) *}, \dots, \vtildex^{(n) *}$. 
\begin{theorem}
\label{thm:decfn}
Define the \emph{bitwise slack function} 
$%\begin{align*}
\gamma^{\vE} (\vw, \vb) 
:= 
- \vb^\top \vw + \frac{1}{n} \sum_{i=1}^n \Psi (\vw^\top \ve^{(i)} )
$,  %\end{align*}
which is convex in $\vw$. 
W.r.t. any $\vb_v$, this has minimizing weights 
$ \displaystyle
\vw_v^* := \vw_v^* (\vE, \vB) := \argmin_{ \vw \in \RR^H } \;\gamma^{\vE} (\vw, \vb_v)
$. 
Then the minimax value of the game \eqref{eq:mmxloss} is 
$\displaystyle \cL_{\vB}^* (\vE) = \frac{1}{2} \sum_{v=1}^V \;\gamma^{\vE} (\vw_v^* , \vb_v)$.  
For any example $i \in [n]$, 
the minimax optimal reconstruction can be written for any bit $v$ as 
$%\displaystyle
\tilde{x}_v^{(i)*} := \frac{1 - e^{- \vw_v^{* \top} \ve^{(i)} } }{1 + e^{- \vw_v^{* \top} \ve^{(i)} }}
$.
\end{theorem}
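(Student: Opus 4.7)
The plan is to reduce the matrix-valued minimax problem to $V$ independent scalar saddle-point problems, one per decoded bit $v$, exploiting the fact that the cross-entropy loss in \eqref{eq:defxentloss} and the correlation constraints $\frac{1}{n}\vE\vx_v = \vb_v$ both decouple across $v$. For each bit-$v$ subproblem, I would apply Sion's minimax theorem to swap the outer $\min_{\vtildex_v}$ with the inner $\max_{\vx_v}$: the bit-$v$ loss is linear in $\vx_v$ and convex in $\vtildex_v$, and both feasible sets are convex and compact (the inner one is a polytope obtained by intersecting $[-1,1]^n$ with the affine correlation constraint).

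Once the max is outermost, I would solve the inner min over each $\tilde{x}_v^{(i)}$ pointwise. Because the bit-$v$ per-example loss is a convex mixture of $\ell_+(\tilde{x})$ and $\ell_-(\tilde{x})$ with weights determined solely by $x_v^{(i)}$, the minimizing $\tilde{x}_v^{(i)}$ equals $x_v^{(i)}$ and the resulting value is the binary entropy of $(1 + x_v^{(i)})/2$. The saddle problem thus reduces to maximizing $\frac{1}{n}$ times the sum of these entropies over $\vx_v \in [-1,1]^n$ subject to $\frac{1}{n}\vE\vx_v = \vb_v$, a concave maximization with linear equality constraints.

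Next I would take the Lagrangian dual, introducing $\vw \in \RR^H$ for the correlation constraint; strong duality holds because the constraints are linear and the feasible set has nonempty relative interior. The Lagrangian separates across $i$, and each inner univariate max of the binary entropy of $(1+x)/2$ minus $x \cdot (\vw^\top \ve^{(i)})$ has a closed-form tanh maximizer whose optimal value is a log-sum-exp tied to $\Psi$ evaluated at a scaled $\vw^\top \ve^{(i)}$. A single change of variable (using the evenness of $\Psi$ to absorb a sign and a factor of two) converts the resulting dual objective into $\frac{1}{2}\gamma^{\vE}(\vw, \vb_v)$, reproducing the stated minimax value, while the same reparametrization turns the tanh maximizer into the sigmoidal form $(1-e^{-\vw_v^{*\top}\ve^{(i)}})/(1+e^{-\vw_v^{*\top}\ve^{(i)}})$ claimed for $\tilde{x}_v^{(i)*}$; this is legitimate because at the saddle point the decoder's optimal play matches the adversary's $\vx_v^*$.

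The principal technical obstacle is rigorously justifying the minimax swap and strong-duality steps given that $\ell_\pm$ blow up at $\tilde{x} = \pm 1$. I would handle this by first restricting $\vtildex_v$ to the open cube $(-1,1)^n$ (where the loss is finite and continuous), verifying Sion's hypotheses and Slater's condition on that restricted domain, and then noting that the minimizer $\tilde{x}_v^{(i)*}$ lies strictly inside $(-1,1)$ whenever $\vw_v^{*\top}\ve^{(i)}$ is finite, so no boundary issue arises at the optimum. Convexity of $\gamma^{\vE}(\cdot, \vb_v)$ in $\vw$, needed for the existence and uniqueness of $\vw_v^*$, is immediate from the convexity of $\Psi$ as a sum of two log-sum-exp functions.
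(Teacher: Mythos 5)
Your argument is sound, but it takes a genuinely different route from the paper's proof. The paper never passes through a maximum-entropy problem: it dualizes the adversary's correlation constraints immediately (introducing $\vw_v$ as Lagrange multipliers), invokes the minimax theorem once to pull $\min_{\vW}$ outside, lets the adversary best-respond by sign-matching $x_v^{(i)} = \sgn\lrp{\vw_v^\top \ve^{(i)} - \Gamma(\tilde{x}_v^{(i)})}$, and only then minimizes over $\tilde{x}_v^{(i)}$, which produces $\Psi(\vw_v^\top\ve^{(i)})$ and the logistic reconstruction directly inside the minimization. You instead swap $\min_{\tilde{\vX}}$ and $\max_{\vX}$ first (Sion), use the fact that the log-loss best response is $\tilde{x}_v^{(i)} = x_v^{(i)}$ with value equal to the binary entropy of $(1+x_v^{(i)})/2$, and then dualize the resulting constrained entropy maximization; your log-partition computation $\ln(e^{m}+e^{-m}) = \tfrac12\Psi(2m)$ and the rescaling $\vw \mapsto -\vw/2$ (using evenness of $\Psi$) indeed reproduce $\tfrac12\gamma^{\vE}(\vw,\vb_v)$, and $\tanh(m/2) = \frac{1-e^{-m}}{1+e^{-m}}$ gives the stated sigmoid. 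What your route buys is the classical exponential-family/max-entropy interpretation of the optimal decoder and a clean strong-duality justification; what the paper's route buys is generality, since its order of operations never uses the special property that the pointwise best response to $x$ is $\tilde{x}=x$ with an entropy value, and therefore extends verbatim to arbitrary monotone partial losses (Appendix \ref{sec:genlossfull}), where your entropic reduction is specific to cross-entropy. Two points in your sketch deserve explicit care in a full writeup: (i) identifying the reconstruction: from the swapped order you only obtain the adversary's maximin play $\vx_v^*$ and its best response, so you need the standard saddle-point argument (equality of values plus attainment, together with strict convexity of the loss in $\tilde{x}$, so the unique best response to $\vx_v^*$ coincides with the minimax-optimal $\vtildex_v^*$) to conclude the claimed form of $\tilde{x}_v^{(i)*}$; and (ii) convexity of $\gamma^{\vE}(\cdot,\vb_v)$ gives neither existence nor uniqueness of $\vw_v^*$ in degenerate cases (e.g.\ $\vb_v$ on the boundary of the feasible correlation set, or encodings not spanning $\RR^H$), though this caveat applies equally to the paper's statement and does not affect the minimax value.
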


%Our goal here is to learn the decoder with the best worst-case loss guarantee. 
%Stated differently, we can imagine the decoder trying to match an adversary who is free to set the actual decoded vectors $\vx^i$, 
%subject to the constraints $\frac{1}{n} \vE \vx_v = \vb_v$ for every visible bit $v$.
%Since we make no further assumptions on the decoder, 
%we allow it to play any $\vtildex^1, \dots, \vtildex^n \in [-1,1]^V$. 

This tells us that the optimization problem of finding the minimax optimal reconstructions $\vtildex^{(i)}$ 
is extremely convenient in several respects. 
%\begin{itemize}
%\item
The learning problem decomposes over the $V$ bits in the decoding, reducing to solving for a weight vector $\vw_v^* \in \RR^H$ for each bit $v$, 
by optimizing each bitwise slack function. 
%\item
Given the weights, the optimal reconstruction of any example $i$ can be specified by a layer of 
logistic sigmoid artificial neurons of its encoded bits, with $\vw_v^{* \top} \ve^{(i)}$ as the bitwise logits.
%\end{itemize}

Hereafter, we write $\vW \in \RR^{V \times H}$ as the matrix of decoding weights, with rows $\lrsetb{\vw_v}_{v=1}^V$. 
In particular, the optimal decoding weights $\vW^* (\vE, \vB)$ are the matrix with rows $\lrsetb{\vw_v^* (\vE, \vB)}_{v=1}^V$.

%===================================================================================================================

\section{Learning an Autoencoder}
\label{sec:learn}

\subsection{Finding an Encoded Representation}
\label{sec:optencode}

Having computed the optimal decoding function in the previous section given any $\vE$ and $\vB$, 
we now switch perspectives to the encoder, which seeks to compress the input data $\vX$ into encoded representations 
$\vE$ (from which $\vB$ is easily calculated to pass to the decoder). 
We seek to find $(\vE, \vB)$ to ensure the lowest worst-case reconstruction loss after decoding; 
recall that this is $\cL_{\vB}^* (\vE)$ from \eqref{eq:mmxloss}. 

Observe that $\frac{1}{n} \vX \vE^\top = \vB$, and that the encoder is given $\vX$. 
Therefore, in terms of $\vX$, 
\begin{align}
\label{eq:rewriteloss}
\cL_{\vB}^* (\vE) %&:= \cL^{\vW^* (\vE, \vB)} (\vE, \vB) 
%&= \frac{1}{2} \sum_{v=1}^V \; \lrb{ - \vb_v^\top \vw_v^* + \frac{1}{n} \sum_{i=1}^n \Psi (\vw_v^{* \top} \ve^{(i)} ) } 
%&= \frac{1}{2 n} \sum_{v=1}^V \; \lrb{ - \vx_v^\top \vE^\top \vw_v^* + \sum_{i=1}^n \Psi (\vw_v^{* \top} \ve^i) } \\
= \frac{1}{2 n} \sum_{i=1}^n \sum_{v=1}^V \; \lrb{ - x_v^{(i)} (\vw_v^{* \top} \ve^{(i)} ) + \Psi (\vw_v^{* \top} \ve^{(i)} ) } 
:= \cL (\vW^*, \vE) %\cL^{\vW^*} (\vE)
\end{align}
by using Thm. \ref{thm:decfn} and substituting $\vb_v = \frac{1}{n} \vE \vx_v \;\;\forall v \in [V]$. 
%into the worst-case loss \eqref{eq:mmxloss}. 
So it is convenient to define the \emph{bitwise feature distortion}
\footnote{Noting that $\Psi (\vw_v^{ \top} \ve) \approx \abs{\vw_v^{ \top} \ve}$, we see that 
$\beta_v^{\vW} (\ve, \vx) \approx \vw_v^{ \top} \ve \lrp{\sgn(\vw_v^{ \top} \ve) - x_v}$, 
so $\vw_v^{ \top} \ve$ is encouraged to match signs with $x_v$, motivating the name.} 
for any $v \in [V]$ with respect to $\vW$, between any example $\vx$ and its encoding $\ve$: 
\begin{align}
\label{eq:bitwisedist}
\beta_v^{\vW} (\ve, \vx) := - x_v \vw_v^{ \top} \ve + \Psi (\vw_v^{ \top} \ve)
\end{align}

From the above discussion, the best $\vE$ given any decoding $\vW$, written as $\vE^* (\vW)$, solves the minimization  
\begin{align*}
\min_{ \vE \in [-1,1]^{H \times n} } \cL (\vW, \vE)
%= \min_{ \vE \in [-1,1]^{H \times n} }\; \frac{1}{2 n} \sum_{i=1}^n \sum_{v=1}^V \; \beta_v^{\vW} (\ve^{(i)} , \vx^{(i)} ) 
= \frac{1}{2 n} \sum_{i=1}^n \min_{ \ve^{(i)} \in [-1,1]^H } \sum_{v=1}^V \; \beta_v^{\vW} (\ve^{(i)}, \vx^{(i)} ) 
\end{align*}

%The inner minimization, of the function $f(\ve^i) := \sum_{v=1}^V \beta_v^{\vW^*} (\ve^i, \vx^i) $ with respect to $\ve^i$, 
%can be solved component-wise for each of the $H$ coordinates of $\ve^i$. 
%Choosing any such coordinate $e_h^i$ for $h \in [H]$, 
%we see that $f(\ve^i)$ is convex in $e_h^i$, so its minimum is achieved at $\clip (e_h^{i*})$ 
%where $e_h^{i*}$ satisfies the first-order optimality condition 
%$$ 0 = \pderiv{f (\ve^{i*})}{e_h^{i}} = \sum_{v=1}^V \lrb{ w_{v,h}^{*} \lrp{ \Psi' (\vw_v^{* \top} \ve^{i*}) - x_v^i } } $$
%\akshay{There are ``ghost labels" here, and we run gradient descent until we're close to the optimum. 
%Do this in parallel for all data.  }

which immediately yields the following result. 

\begin{proposition}
\label{prop:encfn}
Define the optimal encodings for decoding weights $\vW$ as 
$ \displaystyle 
\vE^* (\vW) := \argmin_{ \vE \in [-1,1]^{H \times n} } \; \cL (\vW, \vE)
$. 
Each example $\vx^{(i)} \in [-1,1]^{V}$ can be separately encoded as $\ve^{(i) *} (\vW)$, 
with its optimal encoding minimizing its total feature distortion over the decoded bits w.r.t. $\vW$: 
\begin{align}
\label{eq:defofenc}
\encode (\vx^{(i)} ; \vW) := \ve^{(i) *} (\vW) := \argmin_{\ve \in [-1,1]^{H}} \; \sum_{v=1}^V \; \beta_v^{\vW} (\ve, \vx^{(i)} ) 
\end{align}
\end{proposition}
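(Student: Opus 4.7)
The proposition is essentially a separability observation: once the loss has been rewritten in the form of equation \eqref{eq:rewriteloss}, the minimization over $\vE$ factorizes across the $n$ examples, so each $\ve^{(i)}$ can be optimized independently. The plan is therefore to make this decomposition explicit and invoke the definition of $\beta_v^{\vW}$ from \eqref{eq:bitwisedist}.

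First I would start from the identity $\cL_{\vB}^{*}(\vE) = \cL(\vW^*, \vE)$ proved in \eqref{eq:rewriteloss}, and simply recognize that each summand $-x_v^{(i)} \vw_v^\top \ve^{(i)} + \Psi(\vw_v^\top \ve^{(i)})$ is exactly $\beta_v^{\vW}(\ve^{(i)}, \vx^{(i)})$. This gives the compact form
\begin{equation*}
\cL(\vW, \vE) \;=\; \frac{1}{2n}\sum_{i=1}^{n}\sum_{v=1}^{V} \beta_v^{\vW}(\ve^{(i)}, \vx^{(i)}).
\end{equation*}
Crucially, the column $\ve^{(i)}$ of $\vE$ appears only in the inner sum indexed by $i$; moreover, the constraint set $[-1,1]^{H\times n}$ is a product of the per-column constraint sets $[-1,1]^{H}$, so the feasible region also decouples.

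Next I would use this separability to interchange the minimum with the outer sum, writing
\begin{equation*}
\min_{\vE \in [-1,1]^{H\times n}}\cL(\vW,\vE) \;=\; \frac{1}{2n}\sum_{i=1}^{n} \min_{\ve \in [-1,1]^{H}} \sum_{v=1}^{V} \beta_v^{\vW}(\ve, \vx^{(i)}),
\end{equation*}
which is valid precisely because both the objective and the constraints decouple across $i$. Reading off each summand yields the per-example rule $\ve^{(i)*}(\vW) = \argmin_{\ve\in[-1,1]^H}\sum_{v} \beta_v^{\vW}(\ve, \vx^{(i)})$, which is exactly \eqref{eq:defofenc}.

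There is no real obstacle here; the statement is a direct consequence of Theorem~\ref{thm:decfn} together with the rewriting \eqref{eq:rewriteloss}, and the only thing to verify is that the $\argmin$ is attained (which follows from continuity of $\beta_v^{\vW}$ in $\ve$ and compactness of $[-1,1]^H$, so that one may write $\argmin$ rather than $\inf$). If one wanted to say anything further, it would be to note that $\sum_v \beta_v^{\vW}(\ve,\vx^{(i)})$ is a sum of convex functions of $\ve$ (each $\Psi(\vw_v^\top \ve)$ being convex and the linear term obviously so), so the per-example encoding problem is itself a convex program over the box $[-1,1]^H$.
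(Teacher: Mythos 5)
Your proposal is correct and follows essentially the same route as the paper, which obtains the proposition directly from the displayed decomposition $\min_{\vE}\cL(\vW,\vE)=\frac{1}{2n}\sum_{i=1}^n\min_{\ve^{(i)}\in[-1,1]^H}\sum_{v=1}^V\beta_v^{\vW}(\ve^{(i)},\vx^{(i)})$ immediately preceding the statement, i.e.\ the separability of both the objective (via \eqref{eq:rewriteloss} and the definition \eqref{eq:bitwisedist}) and the box constraint across the $n$ columns. Your added remarks on attainment of the $\argmin$ by compactness and on per-example convexity are consistent with the paper's surrounding discussion and do not change the argument.
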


%\begin{align*}
%\decode (\ve ; \vW) := \sigma \lrp{ \vW^{\top} \ve }
%\end{align*}

Observe that the encoding function $\encode (\vx^{(i)} ; \vW)$ can be efficiently computed to desired precision since 
the feature distortion $\beta_v^{\vW} (\ve, \vx^{(i)} )$ of each bit $v$ is convex and Lipschitz in $\ve$; 
an $L_1$ error of $\epsilon$ can be reached in $\cO(\epsilon^{-2})$ linear-time first-order optimization iterations. 
%each of which require $\cO(H)$ computation and memory. 
Note that the encodings need not be bits, and can be e.g. unconstrained $\in \RR^{H}$ instead; 
the proof of Thm. \ref{thm:decfn} assumes no structure on them.

\subsection{An Autoencoder Learning Algorithm}
\label{sec:altalg}

%So far, we have derived the minimax reconstruction loss given any encoding $\vE, \vB$ (Eq. \eqref{eq:mmxloss}), and used this to reason about the best possible encoding given any decoding parameters $\vW$ (Eq. \eqref{eq:defofenc}, which makes the minimax loss as low as possible). 
Our ultimate goal is to minimize the worst-case reconstruction loss. 
As we have seen in \eqref{eq:mmxloss} and \eqref{eq:defofenc}, it is convex in the encoding $\vE$ and in the decoding parameters $\vW$, 
each of which can be fixed while minimizing with respect to the other. 
This suggests a learning algorithm that alternately performs two steps: 
finding encodings $\vE$ that minimize $\cL (\vW, \vE)$ as in \eqref{eq:defofenc} with a fixed $\vW$, 
and finding decoding parameters $\vW^* (\vE, \vB)$, 
as given in \Cref{alg:aerealalg}.

\begin{algorithm}
   \caption{Pairwise Correlation Autoencoder ($\algname$)}
   \label{alg:aerealalg}
\begin{algorithmic}
   \STATE {\bfseries Input:} Size-$n$ dataset $U$
   %, minibatch size $n_t$
   \STATE Initialize $\vW_0$ (e.g. with each element being i.i.d. $\sim \mathcal{N} (0,1)$)
   \FOR{$t = 1$ {\bfseries to} $T$}
   %\STATE Choose minibatch indices $S_t := \lrsetb{ i^t_1, \dots, i^t_{n_t}} \subseteq [n]$
   \STATE Encode each example 
   %in $S_t$ 
   to ensure accurate reconstruction using weights $\vW_{t-1}$, 
   and compute the associated pairwise bit correlations $\vB_t$: 
   \begin{align*}
   \forall i \in [n] : [\ve^{(i)} ]_{t} = \encode (\vx^{(i)} ; \vW_{t-1})
   \qquad , \qquad
   \vB_t = \frac{1}{n} \vX \vE_{t}^\top
   \end{align*}
   \vspace{1mm}
   \STATE Update weight vectors $[\vw_v]_{t}$ for each $v \in [V]$ to minimize slack function, using encodings $\vE_t$: 
   \begin{align*}
   \forall v \in [V] : [\vw_v]_{t} = \argmin_{ \vw \in \RR^H } \; \lrb{ - [\vb_v]_t^\top \vw + \frac{1}{n} \sum_{i=1}^n \Psi (\vw^\top \ve_t^{(i)} ) }
   \end{align*}
   \ENDFOR
   \STATE {\bfseries Output:} Weights $\vW_{T}$
\end{algorithmic}
\end{algorithm}

\subsection{Efficient Implementation}
\label{sec:efficiency}
Our derivation of the encoding and decoding functions involves no model assumptions at all, 
only using the minimax structure and pairwise statistics that the algorithm is allowed to remember. 
Nevertheless, the (en/de)coders can still be learned and implemented efficiently. 

Decoding is a convex optimization in $H$ dimensions, which can be done in parallel for each bit $v \in [V]$. 
This is relatively easy to solve in the parameter regime of primary interest when data are abundant, in which $H < V \ll n$. 
%Each example $\vx^{(i)}$ is individually encoded using the function $\encode (\vx^{(i)} ; \vW)$ defined in \eqref{eq:defofenc}, so as to minimize the worst-case reconstruction loss of a decoder with weights $\vW$. 
Similarly, encoding is also a convex optimization problem in only $H$ dimensions. 
If the data examples are instead sampled in minibatches of size $n$, they can be encoded in parallel, 
with a new minibatch being sampled to start each epoch $t$. 
%and the pairwise bit correlations $b_{v,h} = \frac{1}{n} \sum_{i=1}^n x_v^{(i)} e_h^{(i)}$ can be efficiently updated as cumulative means since the data are i.i.d. 
The number of examples $n$ (per batch) is essentially only limited by $nH$, the number of compressed representations that fit in memory. 

%The objective function $\gamma^{\vE} (\vw , [\vb_v]_{t})$ depends on $\vE$ only through $\frac{1}{n} \sum_{i=1}^n \Psi (\vw^\top \ve^{(i)} )$, an average of convex functions, so it can be optimized extremely efficiently using stochastic optimization methods like stochastic gradient descent (SGD). 

So far in this paper, we have stated our results in the transductive setting, 
in which all data are given together a priori, 
with no assumptions whatsoever made about the interdependences between the $V$ features. 
However, $\algname$ operates much more efficiently than this might suggest. 
%because of the way its update scheme depends on the data. 
Crucially, the encoding and decoding tasks both depend on $n$ only to average a function of $\vx^{(i)}$ or $\ve^{(i)}$ over $i \in [n]$, 
so they can both be solved by stochastic optimization methods that use first-order gradient information, like variants of stochastic gradient descent (SGD). 
We find it remarkable that the minimax optimal encoding and decoding can be efficiently learned 
by such methods, which do not scale computationally in $n$. 
Note that the result of each of these steps involves $\Omega(n)$ outputs ($\vE$ and $\tilde{\vX}$), which are all coupled together in complex ways. 
%But learning the minimax optimal solution only requires $\mathcal{O}(H)$ memory at any time, making the algorithm practical. 
%Our optimization for the  is regularized, onveying encoding information only through pairwise correlations

The efficient implementation of first-order methods turns out to manipulate more intermediate gradient-related quantities with facile interpretations. 
For details, see Appendix \ref{sec:details}.

\subsection{Convergence and Weight Regularization}
\label{sec:wtreg}

As we noted previously, the objective function of the optimization is \emph{biconvex}. 
This means that under broad conditions, 
the alternating minimization algorithm we specify is an instance of \emph{alternating convex search}, 
shown in that literature to converge under broad conditions (\cite{GPK07}). 
It is not guaranteed to converge to the global optimum, but each iteration will monotonically decrease the objective function. 
In light of our introductory discussion, the properties and rate of such convergence 
would be interesting to compare to stochastic optimization algorithms for PCA, which converge efficiently under broad conditions (\cite{BDF13, S16}). 

The basic game used so far has assumed perfect knowledge of the pairwise correlations, 
leading to equality constraints $\forall v \in [V] : \;\frac{1}{n} \vE \vx_v = \vb_v$. 
This makes sense in $\algname$, where the encoding phase of each epoch gives the exact $\vB_t$ for the decoding phase. 
However, in other stochastic settings as for denoising autoencoders (see Sec. \ref{sec:dae}), 
it may be necessary to relax this constraint. 
A relaxed constraint of $\vnorm{\frac{1}{n} \vE \vx_v - \vb_v}_{\infty} \leq \epsilon$ 
exactly corresponds to an extra additive regularization term of $\epsilon \vnorm{\vw_v}_1$ on the corresponding weights 
in the convex optimization used to find $\vW$ (Appendix \ref{sec:linftyconstr}). 
Such regularization leads to provably better generalization (\cite{B98}) and is often practical to use, e.g. to encourage sparsity. 
But we do not use it for our $\algname$ experiments in this paper. 
%because it does not appear to alter the eventual minimum that our experiments converge toward (possibly because of the noise in the minibatch stochastic optimization methods used). 

%\akshay{Joint optimization simply doesn't work!}

%===============================================================================================================================

\section{Discussion and Related Work}
\label{sec:relworkdisc}

Our approach $\algname$ is quite different from existing autoencoding work in several ways. 

First and foremost, we posit no explicit decision rule, and avoid optimizing the highly non-convex decision surface 
traversed by traditional autoencoding algorithms that learn with backpropagation. 
The decoding function, given the encodings, is a single layer of artificial neurons only because of the minimax structure of the problem 
when minimizing worst-case loss. 
This differs from reasoning typically used in neural net work (see \cite{J95}), 
in which the loss is the negative log-likelihood (NLL) of the joint probability, 
which is \emph{assumed} to follow a form specified by logistic artificial neurons and their weights. 
We instead interpret the loss in the usual direct way as the NLL of the predicted probability of the data given the visible bits, 
and avoid any assumptions on the decision rule (e.g. not monotonicity in the score $\vw_v^\top \ve^{(i)}$, or even dependence on such a score). 
This justification of artificial neurons -- as the minimax optimal decision rules given information on pairwise correlations -- 
is one of our more distinctive contributions (see Sec. \ref{sec:genlosses}). 

Note that there are no assumptions whatsoever on the form of the encoding or decoding, except on the memory used by the decoding. 
Some such restriction is necessary to rule out the autoencoder just memorizing the data, 
and is typically expressed by positing a model class of compositions of artificial neuron layers. 
We instead impose it axiomiatically by limiting the amount of information transmitted through $\vB$, which does not scale in $n$; 
but we do not restrict how this information is used. 
This confers a clear theoretical advantage, 
allowing us to attain the strongest robust loss guarantee among \emph{all possible} autoencoders that use the correlations $\vB$. 

More importantly in practice, avoiding an explicit model class means that we do not have to optimize the typically non-convex model, 
which has long been a central issue for backpropagation-based learning methods (e.g. \cite{DPGCGB14}). 
Prior work related in spirit has attempted to avoid this through convex relaxations, including for multi-layer optimization under various structural assumptions (\cite{AZS14, ZLW16}), 
and when the number of hidden units is varied by the algorithm (\cite{BRVDM05, Bach14}). 

Our approach also isolates the benefit of higher $n$ in dealing with overfitting, as 
the pairwise correlations $\vB$ can be measured progressively more accurately as $n$ increases. 
%using more data and computation will only improve these estimates. 
In this respect, we follow a line of research using such pairwise correlations to model arbitary higher-order structure among visible units, 
rooted in early work on (restricted) Boltzmann Machines (\cite{AHS85, Smo86, RM87, FH92}). 
%\footnote{From \cite{Ben09}: ``An RBM can also be seen as forming a multi-clustering... Each hidden unit creates a 2-region partition of the input space (with a linear separation). The binary setting of the hidden units identifies one region in input space among all the regions associated with configurations of the hidden units... This representation is similar to what an ensemble of 2-leaf trees would create."} 
%\akshay{finish}Such when we are free to choose hidden units and their connections (\cite{CLN11}). 
More recently, theoretical algorithms have been developed with the perspective of learning from the correlations between units in a network, 
under various assumptions on the activation function, architecture, and weights, 
for both deep (\cite{ABGM14}) and shallow networks (using tensor decompositions, e.g. \cite{LSS14, JSA15}). 
Our use of ensemble aggregation techniques (from \cite{BF15, BF16}) to study these problems is anticipated in spirit by prior work as well, 
as discussed at length by \cite{Ben09} in the context of distributed representations.

%The Boltzmann machines paper (\cite{AHS85}) prescribes how to symmetrize pairwise constraint satisfaction problems by adding linearly more constraints.

\subsection{Optimality, Other Architectures and Depth}

We have established that a single layer of logistic artificial neurons is an optimal decoder, 
given only indirect information about the data through pairwise correlations. 
This is not a claim that autoencoders need only a single-layer architecture in the worst case. 
Sec. \ref{sec:optencode} establishes that the best representations $\vE$ are the solution to a convex optimization, 
with no artificial neurons involved in computing them from the data. 
Unlike the decoding function, the optimal encoding function $\encode$ cannot be written explicitly in terms of artificial neurons, 
and is incomparable to existing architectures. 
Also, the encodings are only optimal given the pairwise correlations; 
training algorithms like backpropagation, which indirectly communicate other knowledge of the input data through derivative composition, 
can certainly learn final decoding layers that outperform ours, as we see in experiments. 

%As we have formulated it, the problem of binary autoencoding is a general form of bit compression that uses the i.i.d. nature of the data. 
%The scheme we present does so by remembering the $VH$ pairwise bit correlations, 
%and its guarantee on reconstruction error is at least as good as any other method that uses only the $VH$ correlations to decode $\vE$. 

%The decoding weights have a natural interpretation as corresponding to the pairwise statistics measured, 
%which is a manifestation of the same pairwise interactions that the Boltzmann machine is designed around. 
%The single-layer decoding function achieves the best possible bound on reconstruction error, given these statistics. 
%Even if it is decoupled from the encoding procedure, its optimal form will still be a single layer of logistic neurons. 

In our framework so far, we explore using all the pairwise correlations between hidden and visible bits to inform learning by constraining the adversary, 
resulting in a Lagrange parameter -- a weight -- for each constraint. 
These $VH$ weights $\vW$ constitute the parameters of the optimal decoding layer, 
describing a fully connected architecture. %with each hidden bit connected to all the visible bits. 
If just a select few of these correlations were used, only they would constrain the adversary in the minimax problem of Sec. \ref{sec:optdecode}, 
so weights would only be introduced for them, giving rise to sparser architectures. 

Our central choices to store only pairwise correlations and minimize worst-case reconstruction loss  
play a similar regularizing role to explicit model assumptions, 
and other autoencoding methods may achieve better performance on data for which these choices are too conservative, 
by e.g. making distributional assumptions on the data. 
From our perspective, other architectures with more layers -- 
particularly highly successful ones like convolutional, recurrent, residual, and ladder networks (\cite{LBH15, HZRS15, RBHVR15}) -- 
lend the autoencoding algorithm more power 
by allowing it to measure more nuanced correlations using more parameters, which decreases the worst-case loss. 
Applying our approach with these would be interesting future work. 

%\akshay{For this reason, this is NOT compression (Toderici RNN, Gregor ``conceptual compression")}

Extending this paper's convenient minimax characterization to deep representations with empirical success is a very interesting open problem. 
Prior work on stacking autoencoders/RBMs (\cite{VLLBM10}) and our learning algorithm $\algname$ suggest that we could train a deep network in 
alternating forward and backward passes. 
Using this paper's ideas, the forward pass would learn the weights to each layer given the previous layer's activations (and inter-layer pairwise correlations) by minimizing the slack function, 
with the backward pass learning the activations for each layer given the weights to / activations of the next layer by convex optimization (as we learn $\vE$). 
%And a forward pass deriving the weights to the next layer given hidden ones, by simple  optimization and sigmoid application. 
Both passes consist of successive convex optimizations dictated by our approach, quite distinct from backpropagation, 
though they loosely resemble the wake-sleep algorithm (\cite{HDFN95}).

\subsection{Generative Applications}

Particularly recently, autoencoders have been of interest largely for their many applications beyond compression, 
especially for their generative uses. 
The most directly relevant to us involve repurposing denoising autoencoders (\cite{BYAV13}; see Sec. \ref{sec:dae}); 
moment matching among hidden and visible units (\cite{LSZ15}); 
and generative adversarial network ideas (\cite{GPM+14, MSJGF15}), 
the latter particularly since the techniques of this paper have been applied to binary classification (\cite{BF15, BF15b}). 
These are outside this paper's scope, but suggest themselves as future extensions of our approach.

%Our method is very different from typical autoencoding algorithms that try to choose the best coding functions in a hypothesized model class 
%-- in this case, layers of logistic sigmoid artificial neurons.  
%(This view is common also in supervised machine learning in discussions of \emph{empirical risk minimization}.) 

\section{Extensions}
\label{sec:extensions}

\subsection{Other Reconstruction Losses}
\label{sec:genlosses}

It may make sense to use another reconstruction loss other than cross-entropy, 
for instance the expected Hamming distance between $\vx^{(i)}$ and $\vtildex^{(i)}$. 
It turns out that the minimax manipulations we use work under very broad conditions, 
for nearly any loss that additively decomposes over the $V$ bits as cross-entropy does. 
In such cases, all that is required is that the partial losses $\ell_{+} (\tilde{x}_v^{(i)}), \ell_{-} (\tilde{x}_v^{(i)})$ are monotonically 
decreasing and increasing respectively 
(recall that for cross-entropy loss, this is true as $\ell_{\pm} (\tilde{x}_v^{(i)} ) = \ln \lrp{\frac{2}{1 \pm \tilde{x}_v^{(i)} }}$); 
they need not even be convex. 
This monotonicity is a natural condition, because the loss measures the discrepancy to the true label, and holds for all losses in common use.

Changing the partial losses only changes the structure of the minimax solution in two respects: 
by altering the form of the transfer function on the decoding neurons, and the univariate potential well $\Psi$ optimized to learn the decoding weights. 
Otherwise, the problem remains convex and the algorithm is identical. 
Formal statements of these general results are in \Cref{sec:genlossfull}. 

%An extremely interesting direction of future work is to real-valued data and the associated square loss. 
% (binary hashing?)

\subsection{Denoising Autoencoding}
\label{sec:dae}

Our framework can be easily applied to learn a denoising autoencoder (DAE; \cite{VLBM08, VLLBM10}), 
%depending on the type of noise used to corrupt the inputs. 
which uses noise-corrupted data (call it $\hat{\vX}$) for training, and uncorrupted data for evaluation. 
From our perspective, this corresponds to leaving the learning of $\vW$ unchanged, 
but using corrupted data when learning $\vE$. 
So the minimization problem over encodings must be changed to account for the bias on $\vB$ 
introduced by the noise, so that the algorithm plays given the noisy data, but to minimize loss against $\vX$. 
This is easiest to see for zero-mean noise, for which our algorithms are completely unchanged because $\vB$ does not change after adding noise. 

Another common scenario illustrating this technique is to mask a $\rho$ fraction of the input bits uniformly at random 
(in our notation, changing $1$s to $-1$s). 
This masking noise changes each pairwise correlation $b_{v,h}$ by an amount 
$\delta_{v,h} := \frac{1}{n} \sum_{i=1}^n ( \hat{x}_v^{(i)} - x_v^{(i)} ) e_h^{(i)}$, so the optimand 
Eq. \eqref{eq:rewriteloss} must therefore be modified by subtracting this factor. 
$\delta_{v,h}$ can be estimated (w.h.p.) given $\hat{\vx}_v, \ve_h, \rho, \vx_v$. 
But even with just the noisy data and not $\vx_v$, we can estimate $\delta_{v,h}$ w.h.p. 
by extrapolating the correlation of the bits of $\hat{\vx}_v$ that are left as $+1$ (a $1-\rho$ fraction) with the corresponding values in $\ve_h$. %see Appendix \ref{sec:proofs} for the calculation. \akshay{finish!}

%==========================================================================================================================

\section{Experiments}
\label{sec:experiments}

In this section we compare our approach 
%\footnote{TensorFlow code available at 
%%\texttt{<withheld for submission>}
%\texttt{https://github.com/aikanor/pc-autoencoder}
%.} 
empirically to standard autoencoders with one hidden layer (termed AE here) trained with backpropagation. 
Our goal is simply to verify that our very distinct approach is competitive in reconstruction performance with cross-entropy loss.

%\akshay{MNIST, Caltech-101 silhouette data, notMNIST}. 
The datasets we use are first normalized to $[0,1]$, and then binarized by sampling each pixel stochastically in proportion to its intensity, 
following prior work (\cite{SM08}). 
Choosing between binary and real-valued encodings in $\algname$ requires just a line of code, 
to project the encodings into $[-1,1]^V$ after convex optimization updates to compute $\encode (\cdot)$. 
We use Adagrad (\cite{DHS11}) for the convex minimizations of our algorithms; 
we observed that their performance is not very sensitive to the choice of optimization method, explained by our approach's convexity. 
%Doesn't matter much -- it seems that the capability of the pairwise correlation information is good for bits, and real encodings just give the adversary more latitude. 
%\akshay{So we are superior to vanilla methods optimizing sigmoid transfer functions, which make the problem even more nonconvex! }

We compare to a basic single-layer AE trained with the Adam method with default parameters in \cite{KB14}. 
Other models like variational autoencoders (\cite{KW13}) are not shown here because they do not aim to optimize reconstruction loss 
or are not comparably general autoencoding architectures (see Appendix \ref{sec:expdetails}). 
We try 32 and 100 hidden units for both algorithms, and try both binary and unconstrained real-valued encodings; 
the respective AE uses logistic and ReLU transfer functions for the encoding neurons. The results are in Table \ref{tab:recloss}. 

The reconstruction performance of $\algname$ indicates that it can encode information very well using pairwise correlations. 
Loss can become extremely low when $H$ is raised, giving $\vB$ the capacity to encode far more information. 
The performance is marginally better with binary hidden units than unconstrained ones, in accordance with the spirit of our derivations.

\begin{table}[t]
\caption{Cross-entropy reconstruction losses for $\algname$ and a vanilla single-layer autoencoder, 
with binary and unconstrained real-valued encodings.}
\label{tab:recloss}
\begin{center}
\begin{tabular}{  c || c | c || c | c | }
 & $\algname$ (bin.) & $\algname$ (real) & AE (bin.) & AE (real) \\ \hline \hline
MNIST, $H = 32$ &  51.9 & 53.8 & 65.2 & 64.3 \\ %\hline
MNIST, $H = 100$ & 9.2  & 9.9 & 26.8 & 25.0 \\ %\hline
Omniglot, $H = 32$ & 76.1 & 77.2  & 93.1 & 90.6 \\ %\hline
Omniglot, $H = 100$ & 12.1 & 13.2 & 46.6 & 45.4 \\ %\hline
Caltech-101, $H = 32$ & 54.5  & 54.9  & 97.5 & 87.6 \\ %\hline
Caltech-101, $H = 100$ & 7.1  & 7.1  & 64.3 & 45.4 \\ %\hline
notMNIST, $H = 32$ & 121.9  & 122.4  & 149.6 & 141.8 \\ %\hline
notMNIST, $H = 100$ & 62.2  & 63.0  & 99.6 & 92.1 \\ %\hline
\end{tabular}
\end{center}
\end{table}

We also try learning just the decoding layer of Sec. \ref{sec:optdecode}, on the encoded representation of the AE. 
This is motivated by the fact that  establishes our decoding method to be worst-case optimal given any $\vE$ and $\vB$. 
We find the results to be significantly worse than the AE alone on all datasets 
(reconstruction loss of $\sim 171/133$ on MNIST, and $\sim 211/134$ on Omniglot, with $32/100$ hidden units respectively). 
This reflects the AE's backprop training propagating information about the data beyond pairwise correlations 
through non-convex function compositions -- however, the cost of this is that they are more difficult to optimize. 
The representations learned by the $\encode$ function of $\algname$ are quite different and capture much more of the pairwise correlation information, 
which is used by the decoding layer in a worst-case optimal fashion. 
We attempt to visually depict the differences between the representations in Fig. \ref{fig:repdifferences}. 
%in ways we would like to understand better in future work. 
%\akshay{Filters!}

\begin{figure}[h]
\begin{center}
%\framebox[4.0in]{$\;$}
%\fbox{\rule[-.5cm]{0cm}{4cm} \rule[-.5cm]{4cm}{0cm}}
\includegraphics[width=0.99\linewidth]{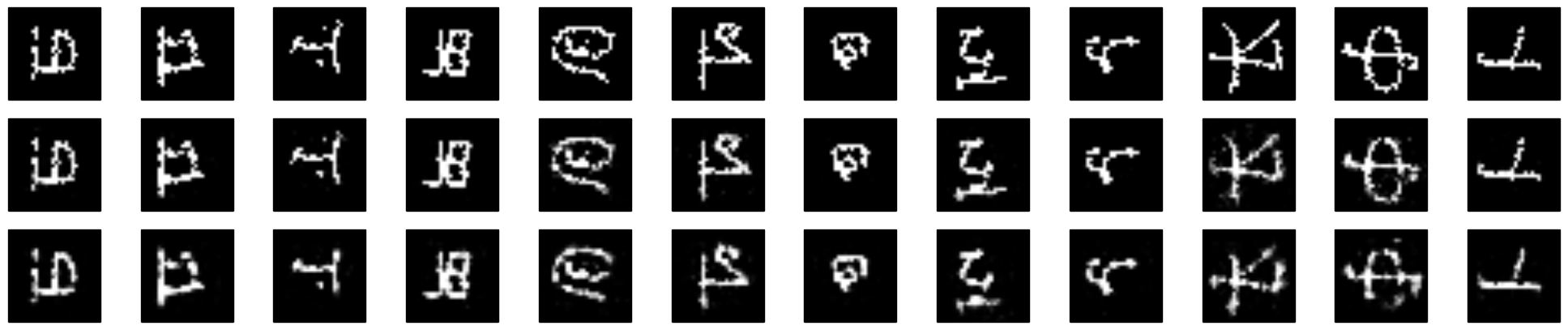}
\end{center}
\caption{Top row: random test images from Omniglot. 
Middle and bottom rows: reconstructions of $\algname$ and AE with $H=100$ binary hidden units. 
Difference in quality is particularly noticeable in the 1st, 5th, 8th, and 11th columns.}
\label{fig:oglot100}
\end{figure}

\begin{figure}[h]
\begin{center}
%\framebox[4.0in]{$\;$}
%\fbox{\rule[-.5cm]{0cm}{4cm} \rule[-.5cm]{4cm}{0cm}}
\includegraphics[width=0.99\linewidth]{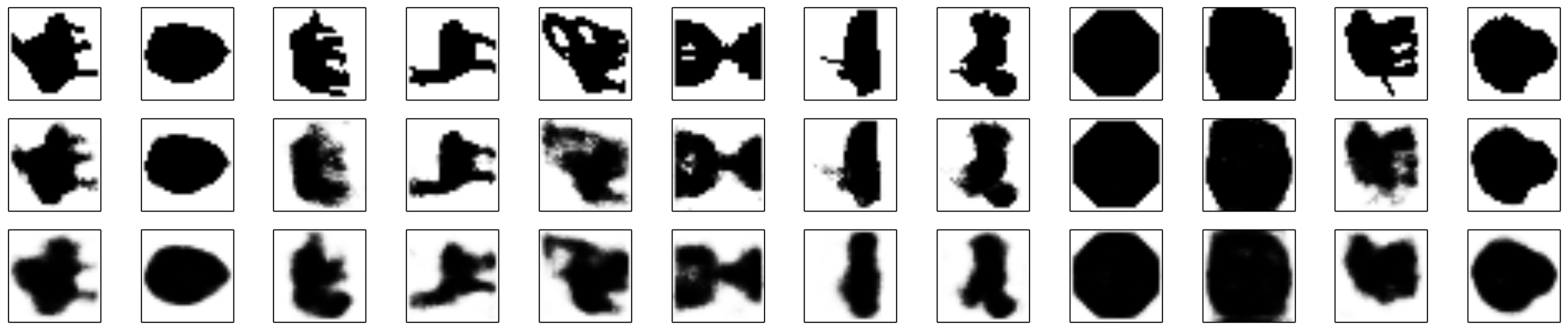}
\end{center}
\caption{As Fig. \ref{fig:oglot100}, with $H=32$ on Caltech-101 silhouettes.}
\label{fig:silh32}
\end{figure}

\begin{figure}[h]
\begin{center}
%\framebox[4.0in]{$\;$}
%\fbox{\rule[-.5cm]{0cm}{4cm} \rule[-.5cm]{4cm}{0cm}}
\includegraphics[width=0.99\linewidth]{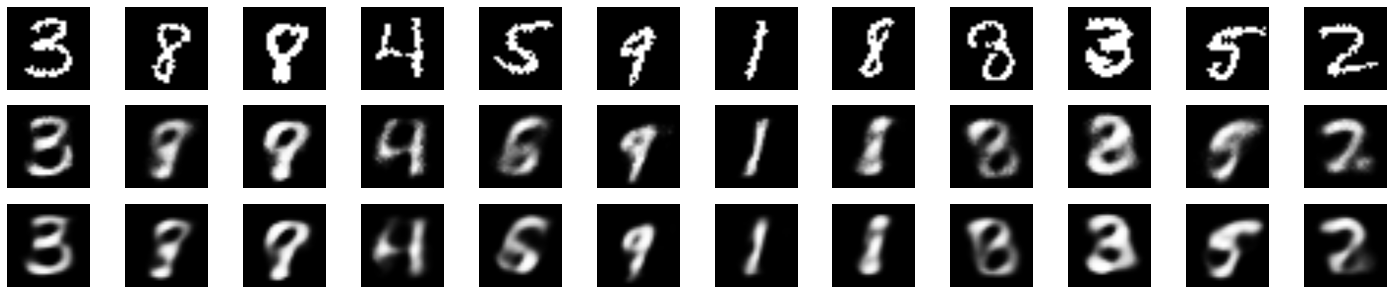} 
\includegraphics[width=0.99\linewidth]{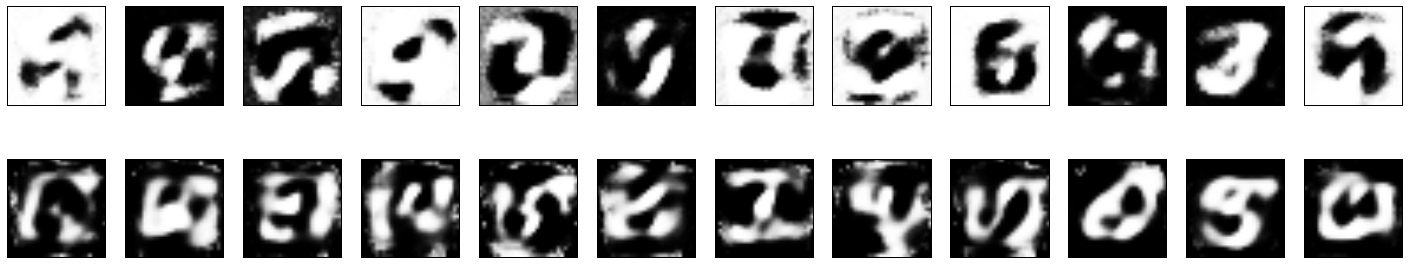}
\end{center}
\caption{Top three rows: the reconstructions of random test images from MNIST ($H=12$), as in Fig. \ref{fig:oglot100}. 
$\algname$ achieves loss $105.1$ here, and AE $111.2$.
Fourth and fifth rows: visualizations of all the hidden units of $\algname$ and AE, respectively. 
It is not possible to visualize the $\algname$ encoding units by the image that maximally activates them, as commonly done, 
because of the form of the $\encode$ function which depends on $\vW$ and lacks explicit encoding weights. 
So each hidden unit $h$ is depicted by the visible decoding of the encoded representation which has bit $h$ "on" 
and all other bits "off." 
(If this were PCA with a linear decoding layer, this would simply represent hidden unit $h$ by its corresponding principal component vector, 
the decoding of the $h^{th}$ canonical basis vector in $\RR^H$.)}
\label{fig:repdifferences}
\end{figure}

As discussed in Sec. \ref{sec:relworkdisc}, we do not claim that our method will always achieve the best empirical reconstruction loss, 
even among single-layer autoencoders. 
We would like to make the encoding function quicker to compute, as well.
But we believe this paper's results, especially when $H$ is high, illustrate the potential of using pairwise correlations for autoencoding 
as in our approach, 
learning to encode with alternating convex minimization and extremely strong worst-case robustness guarantees.

\subsubsection*{Acknowledgments}
I am grateful to Jack Berkowitz, Sanjoy Dasgupta, and Yoav Freund for helpful discussions; 
Daniel Hsu and Akshay Krishnamurthy for instructive examples; 
and Gary Cottrell for enjoyable chats.

\bibliography{AE_short}
\bibliographystyle{iclr2017_conference}
\appendix
\newpage

\section{Experimental Details}
\label{sec:expdetails}

In addition to MNIST, we used the preprocessed version of the Omniglot dataset in \cite{BGS16}, 
split 1 of the Caltech-101 Silhouettes dataset, 
and the small notMNIST dataset. 
Only notMNIST comes without a predefined split, so the displayed results use 10-fold cross-validation.
Non-binarized versions of all datasets resulted in nearly identical $\algname$ performance (not shown), 
as would be expected from its derivation using expected pairwise correlations. 

We used minibatches of size 250. 
All autoencoders were initialized with the 'Xavier' initialization and trained for 500 epochs or using early stopping on the test set. 

We did not evaluate against other types of autoencoders which regularize (\cite{KW13}) or are otherwise not trained for direct reconstruction loss minimization. 
Also, not shown is the performance of a standard convolutional autoencoder (32-bit representation, depth-3 64-64-32 (en/de)coder) which is somewhat better than the standard autoencoder, 
but is still outperformed by $\algname$ on our datasets. 
A deeper architecture could quite possibly achieve superior performance, but the greater number of channels through which information is propagated 
makes fair comparison with our flat fully-connected approach difficult. 
We consider extension of our $\algname$ approach to such architectures to be fascinating future work.

\subsection{Further Results}

Our bound on worst-case loss is invariably quite tight, as shown in Fig. \ref{fig:slacktracksmmxloss}. 
Similar results are found on all datasets. 
This is consistent with our conclusions about the nature of the $\algname$ representations -- conveying almost exactly the information available in pairwise correlations.

\begin{figure}[h]
\begin{center}
%\framebox[4.0in]{$\;$}
%\fbox{\rule[-.5cm]{0cm}{4cm} \rule[-.5cm]{4cm}{0cm}}
\includegraphics[width=0.6\linewidth]{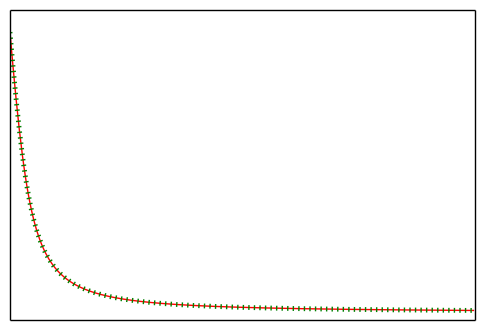}
\end{center}
\caption{Actual reconstruction loss to real data (red) and slack function [objective function] value (dotted green), 
during an Adagrad optimization to learn $\vW$ using the optimal $\vE, \vB$. 
Monotonicity is expected since this is a convex optimization. 
The objective function value theoretically upper-bounds the actual loss, and practically tracks it nearly perfectly.}
\label{fig:slacktracksmmxloss}
\end{figure}

A 2D visualization of MNIST in Fig. \ref{fig:mnist_vis}, 
showing that even with just two hidden units there is enough information in pairwise correlations 
for $\algname$ to learn a sensible embedding. 
We also include more pictures of our autoencoders' reconstructions, 
and visualizations of the hidden units when $H=100$ in Fig. \ref{fig:oglot100enc_vis}.

\begin{figure}[h]
\begin{center}
%\framebox[4.0in]{$\;$}
%\fbox{\rule[-.5cm]{0cm}{4cm} \rule[-.5cm]{4cm}{0cm}}
\includegraphics[width=0.99\linewidth]{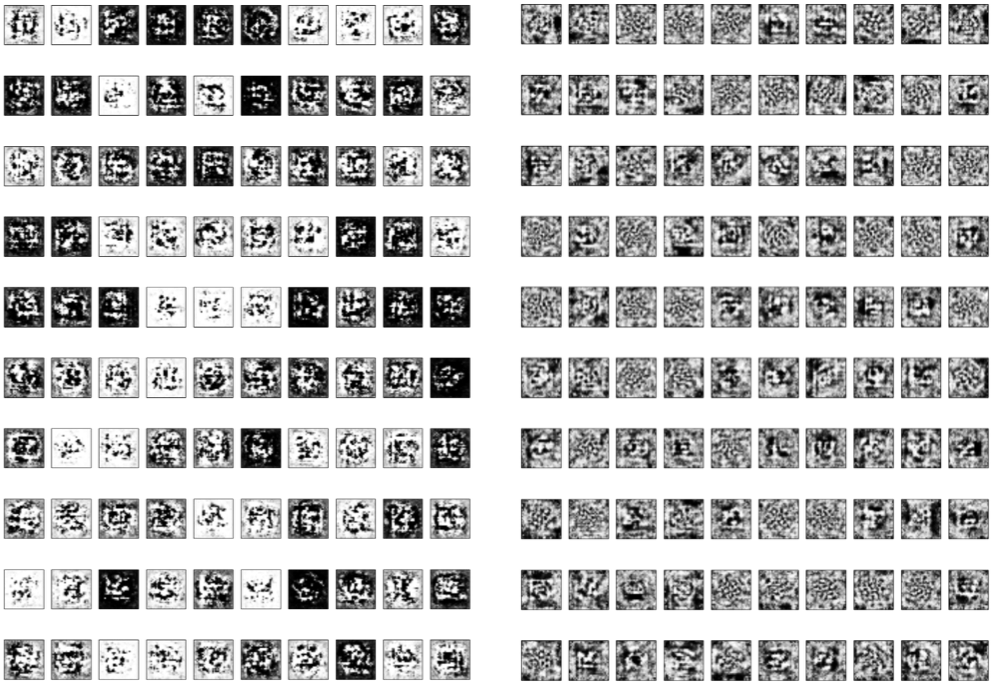}
\end{center}
\caption{Visualizations of all the hidden units of PC-AE (left) and AE (right) from Omniglot for $H=100$, as in Fig. \ref{fig:repdifferences}.}
\label{fig:oglot100enc_vis}
\end{figure}

\begin{figure}[h]
\begin{center}
%\framebox[4.0in]{$\;$}
%\fbox{\rule[-.5cm]{0cm}{4cm} \rule[-.5cm]{4cm}{0cm}}
\includegraphics[height=1.9in, width=0.9\linewidth]{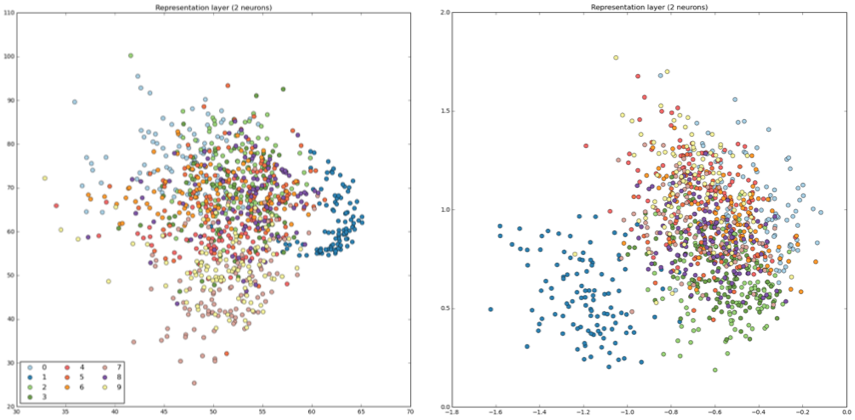}
\end{center}
\caption{AE (left) and $\algname$ (right) visualizations of a random subset of MNIST test data, with $H=2$ real-valued hidden units, and colors corresponding to class labels (legend at left). $\algname$'s loss is $\sim 189$ here, and that of AE is $\sim 179$.}
\label{fig:mnist_vis}
\end{figure}

\begin{figure}[h]
\begin{center}
%\framebox[4.0in]{$\;$}
%\fbox{\rule[-.5cm]{0cm}{4cm} \rule[-.5cm]{4cm}{0cm}}
\includegraphics[width=0.99\linewidth]{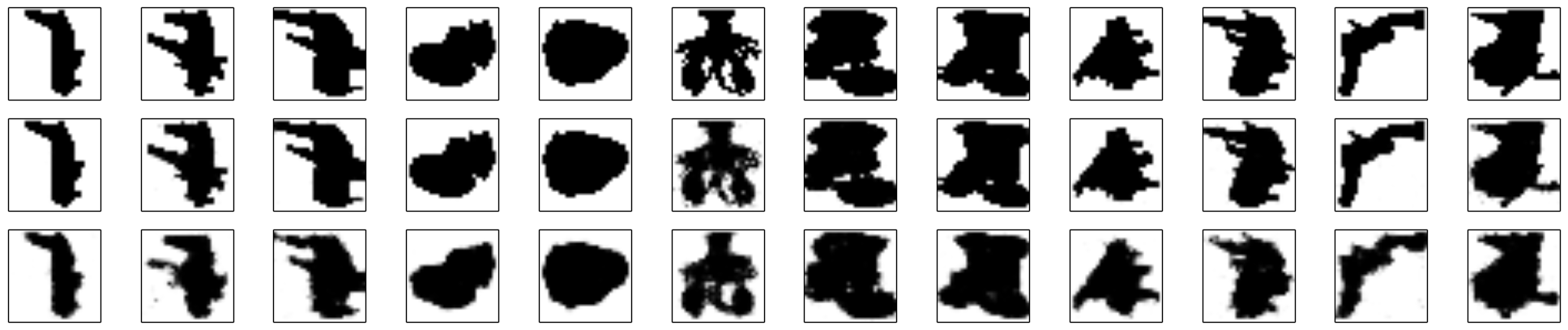}
\end{center}
\caption{As Fig. \ref{fig:oglot100}, with $H=100$ on Caltech-101 silhouettes.}
\label{fig:silh100}
\end{figure}

\begin{figure}[h]
\begin{center}
%\framebox[4.0in]{$\;$}
%\fbox{\rule[-.5cm]{0cm}{4cm} \rule[-.5cm]{4cm}{0cm}}
\includegraphics[width=0.99\linewidth]{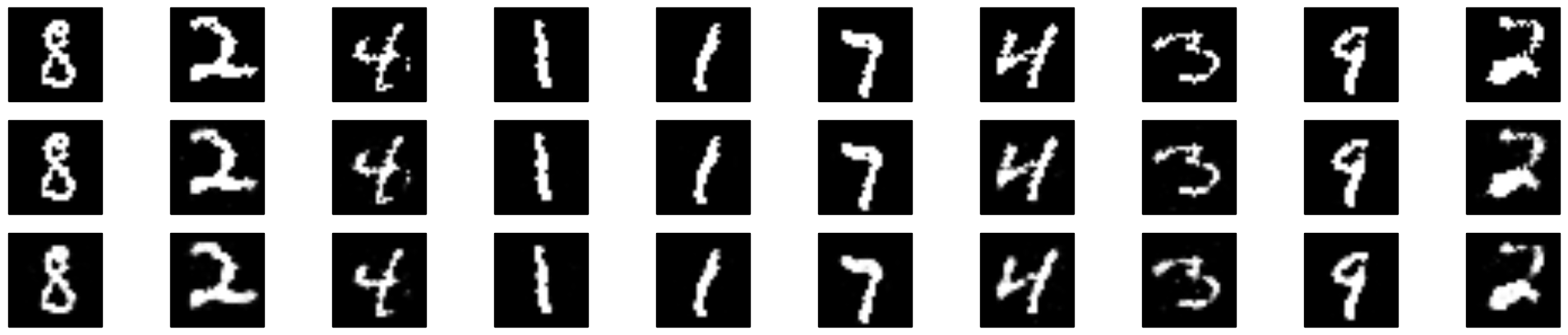}
\end{center}
\caption{As Fig. \ref{fig:oglot100}, with $H=100$ on MNIST.}
\label{fig:mnist100}
\end{figure}

\begin{figure}[h]
\begin{center}
%\framebox[4.0in]{$\;$}
%\fbox{\rule[-.5cm]{0cm}{4cm} \rule[-.5cm]{4cm}{0cm}}
\includegraphics[width=0.99\linewidth]{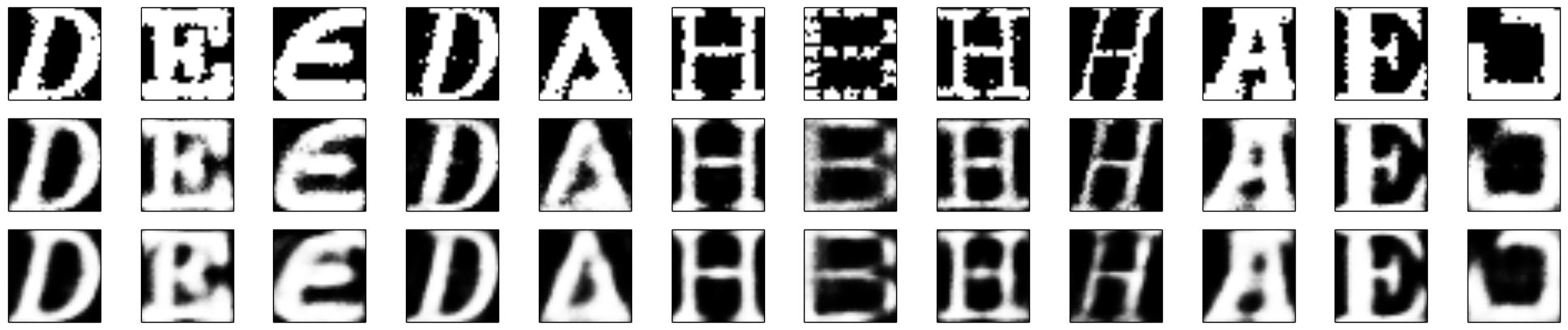}
\end{center}
\caption{As Fig. \ref{fig:oglot100}, with $H=32$ on notMNIST.}
\label{fig:nmn32}
\end{figure}

\subsection{$\algname$ Interpretation and Implementation Details}
\label{sec:details}

Here we give some details that are useful for interpretation and implementation of the proposed method.

\subsubsection{Encoding}

\Cref{prop:encfn} defines the encoding function for any data example $\vx$ as the vector that minimizes the total feature distortion, 
summed over the bits in the decoding, rewritten here for convenience: 
%Given data $\lrsetb{\vx^{(i)} }_{i=1}^{n} = \vX$, we output encodings $\vE$:  
\begin{align}
\label{eq:multidencode}
\encode ( \vx^{(i)} ; \vW) := %\argmin_{\vE \in [-1,1]^{H \times n}} \; \frac{1}{n} \sum_{i=1}^n 
\argmin_{\ve \in [-1,1]^{H}} \; \sum_{v=1}^V \; \lrb{ - x_v^{(i)} \vw_v^{ \top} \ve^{(i)} + \Psi (\vw_v^{ \top} \ve^{(i)} ) }
\end{align}

Doing this on multiple examples at once (in memory as a minibatch) can be much faster than on each example separately. 
We can now compute the gradient of the objective function w.r.t. each example $i \in [n]$,  
writing the gradient w.r.t. example $i$ as column $i$ of a matrix $\vG \in \RR^{H \times n}$. 
$\vG$ can be calculated efficiently in a number of ways, for example as follows:
\begin{itemize}
\item
Compute matrix of \textbf{hallucinated data} $\breve{\vX} := \Psi' (\vW \vE) \in \RR^{V \times n}$. 
\item
Subtract $\vX$ to compute \textbf{residuals} $\vR := \breve{\vX} - \vX \in \RR^{V \times n}$.
\item
Compute $\vG = \frac{1}{n} \vW^\top \vR \in \RR^{H \times n}$. 
\end{itemize}

Optimization then proceeds with gradient descent using $\vG$, with the step size found using line search. 
Note that since the objective function is convex, the optimum $\vE^*$ leads to optimal residuals $\vR^* \in \RR^{V \times n}$ 
such that $\vG = \frac{1}{n} \vW^\top \vR^* = \vzero^{H \times n}$, 
so each column of $\vR^*$ is in the null space of $\vW^\top$, which maps the residual vectors to the encoded space. 
We conclude that although the compression is not perfect (so the optimal residuals $\vR^* \neq \vzero^{V \times n}$ in general), 
each column of $\vR^*$ is orthogonal to the decoding weights at an equilibrium towards which 
the convex minimization problem of \eqref{eq:multidencode} is guaranteed to stably converge.

\subsubsection{Decoding}

The decoding step finds $\vW$ to ensure accurate decoding of the given encodings $\vE$ with correlations $\vB$, 
solving the convex minimization problem: 
\begin{align}
\label{eq:decodeatonce}
\vW^* = \argmin_{ \vW \in \RR^{V \times H} } \; \sum_{v=1}^V \lrb{ - \vb_v^\top \vw_v + \frac{1}{n} \sum_{i=1}^n \Psi (\vw_v^\top \ve^{(i)} ) }
\end{align}
This can be minimized by first-order convex optimization. 
The gradient of \eqref{eq:decodeatonce} at $\vW$ is: 
\begin{align}
\label{eq:gradfdistort}
- \vB + \frac{1}{n} [\Psi' (\vW \vE )] \vE^\top
\end{align}
The second term can be understood as ``hallucinated" pairwise correlations $\breve{\vB}$, 
between bits of the encoded examples $\vE$ and bits of their decodings under the current weights, $\breve{\vX} := \Psi' (\vW \vE )$. 
The hallucinated correlations can be written as $\breve{\vB} := \frac{1}{n} \breve{\vX} \vE^\top$. 
Therefore, \eqref{eq:gradfdistort} can be interpreted as the residual correlations $\breve{\vB} - \vB$. 
Since the slack function of \eqref{eq:decodeatonce} is convex, 
the optimum $\vW^*$ leads to hallucinated correlations $\breve{\vB}^* = \vB$, 
which is the limit reached by the optimization algorithm after many iterations.

\section{Allowing Randomized Data and Encodings}
\label{sec:datarandomized}

In this paper, we represent the bit-vector data in a randomized way in $[-1,1]^{V}$. 
Randomizing the data only relaxes the constraints on the adversary in the game we play; 
so at worst we are working with an upper bound on worst-case loss, instead of the exact minimax loss itself, 
erring on the conservative side. 
Here we briefly justify the bound as being essentially tight, which we also see empirically in this paper's experiments. 

In the formulation of \Cref{sec:optdecode}, the only information we have about the data is its pairwise correlations with the encoding units. 
When the data are abundant ($n$ large), then w.h.p. these correlations are close to their expected values 
over the data's internal randomization, so representing them as continuous values w.h.p. 
results in the same $\vB$ and therefore the same solutions for $\vE, \vW$. 
We are effectively allowing the adversary to play each bit's conditional probability of firing, rather than the binary realization of that probability. 

This allows us to apply minimax theory and duality to considerably simplify the problem to a convex optimization, 
when it would otherwise be nonconvex, and computationally hard (\cite{Baldi2012}). 
The fact that we are only using information about the data through its \emph{expected} pairwise correlations makes this possible.

The above also applies to the encodings and their internal randomization, 
allowing us to learn binary randomized encodings by projecting to the convex set $[-1,1]^{H}$.

\section{Proofs}
\label{sec:proofs}

%\subsection{Binary Data}

\begin{proof}[Proof of Theorem \ref{thm:decfn}]
%decoding functions per output bit are decoupled. 
Writing $\Gamma (\tilde{x}_v^{(i)}) := \ell_{-} (\tilde{x}_v^{(i)}) - \ell_{+} (\tilde{x}_v^{(i)}) = \ln \lrp{\frac{1 + \tilde{x}_v^{(i)}}{1 - \tilde{x}_v^{(i)} }}$ for convenience, 
we can simplify $\cL^*$, using the definition of the loss \eqref{eq:defxentloss}, and Lagrange duality for all $VH$ constraints involving $\vB$. 

This leads to the following chain of equalities, where for brevity the constraint sets are sometimes omitted when clear, 
and we write $\vX$ as shorthand for the data $\vx^{(1)}, \dots, \vx^{(n)}$ and $\tilde{\vX}$ analogously for the reconstructions.
\begin{align}
\label{eq:defofdecloss}
\cL^* &= 
\frac{1}{2} \min_{\substack{\vtildex^{(1)}, \dots, \vtildex^{(n)} \\ \in [-1,1]^V}} \; \max_{\substack{ \vx^{(1)}, \dots, \vx^{(n)} \in [-1,1]^V , \\ \forall v \in [V] : \;\frac{1}{n} \vE \vx_v = \vb_v }} 
\; \frac{1}{n} \sum_{i=1}^n \sum_{v=1}^V \lrb{ \lrp{ 1 + x_v^{(i)} } \ell_{+} (\tilde{x}_v^{(i)}) + \lrp{ 1 - x_v^{(i)} } \ell_{-} (\tilde{x}_v^{(i)}) }  
\nonumber \\
&= \frac{1}{2} \min_{\tilde{\vX} } \; \max_{ \vX } \; \min_{ \vW \in \RR^{V \times H}} 
\; \lrb{ \frac{1}{n} \sum_{i=1}^n \sum_{v=1}^V \lrp{ \ell_{+} (\tilde{x}_v^{(i)}) + \ell_{-} (\tilde{x}_v^{(i)}) - x_v^{(i)} \Gamma (\tilde{x}_v^{(i)}) } + \sum_{v=1}^V \vw_v^\top \lrp{ \frac{1}{n} \vE \vx_v - \vb_v } }
\nonumber \\
&\stackrel{(a)}{=} \frac{1}{2} \min_{ \vw_1, \dots, \vw_V } \; \lrb{ - \sum_{v=1}^V \vb_v^\top \vw_v + \frac{1}{n} \min_{\tilde{\vX} } \; \max_{ \vX } \; 
\sum_{v=1}^V \lrb{ \sum_{i=1}^n \lrp{ \ell_{+} (\tilde{x}_v^{(i)}) + \ell_{-} (\tilde{x}_v^{(i)}) - x_v^{(i)} \Gamma (\tilde{x}_v^{(i)}) } + \vw_v^\top \vE \vx_v } } 
\nonumber \\
&= \frac{1}{2} \min_{ \vw_1, \dots, \vw_V } \; \Bigg[ - \sum_{v=1}^V \vb_v^\top \vw_v + \frac{1}{n} \min_{\tilde{\vX} } \; \sum_{i=1}^n 
\sum_{v=1}^V \lrb{ \ell_{+} (\tilde{x}_v^{(i)}) + \ell_{-} (\tilde{x}_v^{(i)}) + 
\max_{ \vx^{(i)}  \in [-1,1]^V } \; x_v^{(i)} \lrp{ \vw_v^\top \ve^{(i)} - \Gamma (\tilde{x}_v^{(i)}) }} \Bigg]
\end{align}
where $(a)$ uses the minimax theorem (\cite{CBL06}), 
which can be applied as in linear programming, because the objective function is linear in $\vx^{(i)}$ and $\vw_v$. 
Note that the weights are introduced merely as Lagrange parameters for the pairwise correlation constraints, 
not as model assumptions. 
 
The strategy $\vx^{(i)}$ which solves the inner maximization of \eqref{eq:defofdecloss} 
is to simply match signs with $\vw_v^\top \ve^{(i)} - \Gamma (\tilde{x}_v^{(i)})$ coordinate-wise for each $v \in [V]$. 
Substituting this into the above, 
\begin{align*}
\cL^* &= \frac{1}{2} \min_{ \vw_1, \dots, \vw_V } \; \lrb{ - \sum_{v=1}^V \vb_v^\top \vw_v + \frac{1}{n} \sum_{i=1}^n \min_{\vtildex^{(i)} \in [-1,1]^V } 
\sum_{v=1}^V \lrp{ \ell_{+} (\tilde{x}_v^{(i)}) + \ell_{-} (\tilde{x}_v^{(i)}) + \abs{ \vw_v^\top \ve^{(i)} - \Gamma (\tilde{x}_v^{(i)}) } } }
\\
%&= \frac{1}{2} \min_{ \vw_1, \dots, \vw_V } \; \lrb{ - \sum_{v=1}^V \vb_v^\top \vw_v + \frac{1}{n} \sum_{i=1}^n  
%\sum_{v=1}^V \min_{\tilde{x}_v^{(i)} \in [-1,1] } \lrp{ \ell_{+} (\tilde{x}_v^{(i)}) + \ell_{-} (\tilde{x}_v^{(i)}) + \abs{ \vw_v^\top \ve^{(i)} - \Gamma (\tilde{x}_v^{(i)}) } } } 
%\\
&= \frac{1}{2} \sum_{v=1}^V \min_{ \vw_v \in \RR^H } \; \lrb{ - \vb_v^\top \vw_v + \frac{1}{n} \sum_{i=1}^n  
\min_{\tilde{x}_v^{(i)} \in [-1,1] } \lrp{ \ell_{+} (\tilde{x}_v^{(i)}) + \ell_{-} (\tilde{x}_v^{(i)}) + \abs{ \vw_v^\top \ve^{(i)} - \Gamma (\tilde{x}_v^{(i)}) } } }
\end{align*}

The absolute value breaks down into two cases, so the inner minimization's objective can be simplified:
\begin{align}
\label{eq:gencvxmmand}
\ell_{+} (\tilde{x}_v^{(i)}) + \ell_{-} (\tilde{x}_v^{(i)}) + \abs{ \vw_v^\top \ve^{(i)} - \Gamma (\tilde{x}_v^{(i)}) }
= \begin{cases} 
2 \ell_{+} (\tilde{x}_v^{(i)}) + \vw_v^\top \ve^{(i)}  \qquad & \mbox{ \; if \; } \vw_v^\top \ve^{(i)} \geq \Gamma (\tilde{x}_v^{(i)}) \\ 
2 \ell_{-} (\tilde{x}_v^{(i)}) - \vw_v^\top \ve^{(i)}  & \mbox{ \; if \; } \vw_v^\top \ve^{(i)} < \Gamma (\tilde{x}_v^{(i)})
\end{cases}
\end{align}

Suppose $\tilde{x}_v^{(i)}$ falls in the first case of \eqref{eq:gencvxmmand}, 
so that $\vw_v^\top \ve^{(i)} \geq \Gamma (\tilde{x}_v^{(i)})$. 
By definition of $\ell_{+} (\cdot)$, $2 \ell_{+} (\tilde{x}_v^{(i)}) + \vw_v^\top \ve^{(i)}$ is decreasing in $\tilde{x}_v^{(i)}$, 
so it is minimized for the greatest $\tilde{x}_v^{(i) *} \leq 1$ s.t. $\Gamma (\tilde{x}_v^{(i) *}) \leq \vw_v^\top \ve^{(i)}$. 
This means $\Gamma (\tilde{x}_v^{(i) *}) = \vw_v^\top \ve^{(i)}$, so the minimand \eqref{eq:gencvxmmand} 
is $\ell_{+} (\tilde{x}_v^{(i) *}) + \ell_{-} (\tilde{x}_v^{(i) *})$, where
$\tilde{x}_v^{i*} = \frac{1 - e^{- \vw_v^\top \ve^{(i)}} }{1 + e^{- \vw_v^\top \ve^{(i)}}}$. 

%exactly one of two subcases holds: 
%\begin{enumerate}[noitemsep]
%\item
%$\tilde{x}_v^{i*}$ is such that $\Gamma (\tilde{x}_v^{i*}) = \vw_v^\top \ve^{(i)}$, in which case the minimand \eqref{eq:gencvxmmand} 
%is $\ell_{+} (\tilde{x}_v^{i*}) + \ell_{-} (\tilde{x}_v^{i*})$
%\item
%$\tilde{x}_v^{i*} = 1$ so that $ \Gamma (\tilde{x}_v^{i*}) = \Gamma (1) < \vw_v^\top \ve^{(i)}$, 
%in which case the minimand \eqref{eq:gencvxmmand} is $2 \ell_{+} (1) + \vw_v^\top \ve^{(i)}$
%\end{enumerate}

A precisely analogous argument holds if $\tilde{x}_v^{(i)}$ falls in the second case of \eqref{eq:gencvxmmand}, 
where $\vw_v^\top \ve^{(i)} < \Gamma (\tilde{x}_v^{(i)})$. 

Putting the cases together, we have shown the form of the summand $\Psi$. 
%piecewise over its domain, so \eqref{eq:gencvxgame} is equal to $ \frac{1}{2} \min_{\sigma \geq 0^p} \lrb{\gamma(\sigma)}$.
We have also shown the dependence of $\tilde{x}_v^{(i) *}$ on $\vw_v^{* \top} \ve^{(i)}$, 
where $\vw_v^{*}$ is the minimizer of the outer minimization of \eqref{eq:defofdecloss}. 
This completes the proof. 
\end{proof}

%\subsection{Calculation for Denoising Masking Noise (Sec. \ref{sec:dae})}
%
%\akshay{ADD!}

\subsection{$L_{\infty}$ Correlation Constraints and $L_1$ Weight Regularization}
\label{sec:linftyconstr}

Here we formalize the discussion of Sec. \ref{sec:wtreg} with the following result.

\begin{theorem}
\begin{align*}
\min_{\vtildex^{(1)}, \dots, \vtildex^{(n)} \in [-1,1]^V} &\; \max_{\substack{ \vx^{(1)}, \dots, \vx^{(n)} \in [-1,1]^V , \\ \forall v \in [V] : \;\vnorm{\frac{1}{n} \vE \vx_v - \vb_v}_{\infty} \leq \epsilon_v }} 
\; \frac{1}{n} \sum_{i=1}^n \ell (\vx^{(i)} , \vtildex^{(i)} ) \\
&= 
\frac{1}{2} \sum_{v=1}^V 
\min_{ \vw_v \in \RR^H } 
\lrb{ - \vb_v^\top \vw_v + \frac{1}{n} \sum_{i=1}^n \Psi (\vw_v^\top \ve^{(i)} ) + \epsilon_v \vnorm{\vw_v}_1 }
\end{align*}
For each $v, i$, the minimizing $\vtildex_v^{(i)}$ is a logistic function of the encoding $\ve^{(i)}$ with weights equal to the minimizing $\vw_v^*$ above, exactly as in Theorem \ref{thm:decfn}. 
\end{theorem}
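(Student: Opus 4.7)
The plan is to mirror the proof of Theorem \ref{thm:decfn} line-by-line, with the only substantive change occurring at the Lagrange-duality step, where the equality constraint $\frac{1}{n}\vE \vx_v = \vb_v$ is replaced by the $L_\infty$ constraint $\|\frac{1}{n}\vE\vx_v - \vb_v\|_\infty \leq \epsilon_v$. Concretely, I would split this $L_\infty$ constraint into two sets of coordinate-wise linear inequalities $\frac{1}{n}\vE\vx_v - \vb_v \leq \epsilon_v \vone$ and $-\frac{1}{n}\vE\vx_v + \vb_v \leq \epsilon_v \vone$, introduce nonnegative Lagrange multipliers $\vlambda_v^+, \vlambda_v^- \in \RR_{\geq 0}^H$ for them, and then reparametrize $\vw_v := \vlambda_v^- - \vlambda_v^+$. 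Under this change of variables $\vw_v$ ranges freely over $\RR^H$, and for fixed $\vw_v$ the minimum of $\vone^\top(\vlambda_v^+ + \vlambda_v^-)$ subject to $\vlambda_v^- - \vlambda_v^+ = \vw_v$, $\vlambda_v^\pm \geq 0$, is exactly $\|\vw_v\|_1$. Hence dualizing the $L_\infty$ constraint contributes an extra term $\epsilon_v\|\vw_v\|_1$ to the Lagrangian compared to the equality-constrained case.

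Having inserted this additional term, I would then apply the minimax theorem to swap the inner $\max_{\vX}$ and $\min_{\vW}$, just as in step $(a)$ of \eqref{eq:defofdecloss}; this remains valid because the new penalty $\epsilon_v \|\vw_v\|_1$ does not depend on $\vX$, so the relevant inner objective is still linear in each $\vx^{(i)}$ with a compact convex domain $[-1,1]^V$, and concave (in fact convex-in-$\vw$ and linear-in-$\vx$) overall, so the linear-programming-style minimax hypothesis of \cite{CBL06} continues to hold. After the swap, the inner maximization over $\vx_v^{(i)} \in [-1,1]$ produces the term $|\vw_v^\top \ve^{(i)} - \Gamma(\tilde{x}_v^{(i)})|$ exactly as before, and then the inner minimization over $\tilde{x}_v^{(i)} \in [-1,1]$ collapses the two cases of the absolute value into the potential $\Psi(\vw_v^\top \ve^{(i)})$, with optimizer $\tilde{x}_v^{(i)*} = (1-e^{-\vw_v^{*\top}\ve^{(i)}})/(1+e^{-\vw_v^{*\top}\ve^{(i)}})$. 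The extra $\epsilon_v \|\vw_v\|_1$ term passes through these steps untouched, since it depends only on $\vw_v$, and so ends up as an additive contribution to the per-bit slack function, yielding the claimed expression and the stated logistic form for the minimizing reconstructions.

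The only step that needs any real care is verifying that strong Lagrangian duality, not merely weak duality, applies to the $L_\infty$-constrained inner problem; without this the reduction produces only an upper bound on $\cL^*$, not an equality. However, the constraint set is polyhedral (finitely many linear inequalities in $\vx_v$ over the compact box $[-1,1]^V$) and the adversary's objective is linear in $\vx_v$, so this is an LP-with-bounded-variables situation where standard LP duality gives strong duality directly, exactly as invoked implicitly in the original Theorem \ref{thm:decfn}. Thus the only substantive new content is the $L_\infty$-to-$L_1$ conjugacy computation sketched above, and the theorem follows.
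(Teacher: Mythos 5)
Your proposal is correct and follows essentially the same route as the paper: split the $L_\infty$ constraint into two one-sided linear inequalities, introduce nonnegative multiplier pairs, observe that only their difference $\vw_v$ enters the rest of the objective so that minimizing $\vone^\top(\xi_v+\lambda_v)$ at fixed difference yields $\epsilon_v\vnorm{\vw_v}_1$ (the paper phrases this as the WLOG reduction $\min(\xi_{v,h},\lambda_{v,h})=0$), and then rerun the Theorem \ref{thm:decfn} argument with the $L_1$ penalty riding along unchanged. Your extra remark justifying the minimax/strong-duality step via the polyhedral, linear-in-$\vx_v$ structure is consistent with the paper's appeal to the same conditions as in Theorem \ref{thm:decfn}.
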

\begin{proof}
The proof adapts the proof of Theorem \ref{thm:decfn}, 
following the result on $L_1$ regularization in \cite{BF16} in a very straightforward way; 
we describe this here. 

We break each $L_{\infty}$ constraint into two one-sided constraints for each $v$, i.e. 
$\frac{1}{n} \vE \vx_v - \vb_v \leq \epsilon_v \vone^n$ and $\frac{1}{n} \vE \vx_v - \vb_v \geq - \epsilon_v \vone^n$. 
These respectively give rise to two sets of Lagrange parameters $\lambda_v , \xi_v \geq \vzero^H$ for each $v$, 
replacing the unconstrained Lagrange parameters $\vw_v \in \RR^H$. 

The conditions for the minimax theorem apply here just as in the proof of Theorem \ref{thm:decfn}, 
so that \eqref{eq:defofdecloss} is replaced by 
\begin{align}
\label{eq:l1defdecloss}
\frac{1}{2} \min_{ \substack{\lambda_1, \dots, \lambda_V \\ \xi_1, \dots, \xi_V}} 
\; &\Bigg[ - \sum_{v=1}^V \lrp{ \vb_v^\top (\xi_v - \lambda_v) - \epsilon_v \vone^\top (\xi_v + \lambda_v) } \\
&+ \frac{1}{n} \min_{\tilde{\vX} } \; \sum_{i=1}^n 
\sum_{v=1}^V \lrb{ \ell_{+} (\tilde{x}_v^{(i)}) + \ell_{-} (\tilde{x}_v^{(i)}) + 
\max_{ \vx^{(i)} } \; x_v^{(i)} \lrp{ (\xi_v - \lambda_v)^\top \ve^{(i)} - \Gamma (\tilde{x}_v^{(i)}) }} \Bigg]
\end{align}

Suppose for some $h \in [H]$ that $\xi_{v,h} > 0$ and $\lambda_{v,h} > 0$. 
Then subtracting $\min(\xi_{v,h}, \lambda_{v,h})$ from both does not affect the value $[\xi_v - \lambda_v]_h$, 
but always decreases $[\xi_v + \lambda_v]_h$, and therefore always decreases the objective function. 
Therefore, we can w.l.o.g. assume that $\forall h \in [H]: \min(\xi_{v,h}, \lambda_{v,h}) = 0$. 
Defining $\vw_v = \xi_v - \lambda_v$ (so that $\xi_{v,h} = [w_{v,h}]_+$ and $\lambda_{v,h} = [w_{v,h}]_-$ for all $h$), 
we see that the term $\epsilon_v \vone^\top (\xi_v + \lambda_v)$ in \eqref{eq:l1defdecloss} can be replaced by $\epsilon_v \vnorm{\vw_v}_1$. 

Proceeding as in the proof of Theorem \ref{thm:decfn} gives the result. 
\end{proof}

\section{General Reconstruction Losses}
\label{sec:genlossfull}

Using recent techniques of \cite{BF16}, in this section we extend Theorem \ref{thm:decfn} to a larger class of reconstruction losses for binary autoencoding, 
of which cross-entropy loss is a special case. 

Since the data $\vX$ are still randomized binary, 
we first broaden the definition of \eqref{eq:defxentloss}, rewritten here:
\begin{align}
\label{eq:firstgenloss}
\ell (\vx^{(i)} , \vtildex^{(i)} ) := \sum_{v=1}^V \lrb{ \lrp{\frac{1 + x_v^{(i)} }{2} } \ell_{+} (\tilde{x}_v^{(i)} ) + \lrp{\frac{1 - x_v^{(i)} }{2} } \ell_{-} (\tilde{x}_v^{(i)} ) }
\end{align}
We do this by redefining the partial losses $\ell_{\pm} (\tilde{x}_v^{(i)} )$, 
to any functions satisfying the following monotonicity conditions.
\begin{assumption}
\label{ass:loss}
Over the interval $(-1,1)$, $\ell_{+} (\cdot)$ is decreasing and $\ell_{-} (\cdot)$ is increasing, 
and both are twice differentiable.%\footnote{Differentiability is convenient for our proofs, but most of our arguments do not require it; see \Cref{sec:discussion}.}
\end{assumption}
Assumption \ref{ass:loss} is a very natural one and includes many non-convex losses (see \cite{BF16} for a more detailed discussion, much of which applies bitwise here). 
This and the additive decomposability of \eqref{eq:firstgenloss} over the $V$ bits are the only assumptions we make on the reconstruction loss $\ell (\vx^{(i)} , \vtildex^{(i)} )$. 
The latter decomposability assumption is often natural when the loss is a log-likelihood, 
where it is tantamount to conditional independence of the visible bits given the hidden ones. 

Given such a reconstruction loss, define the increasing function $\Gamma (y) := \ell_{-} (y) - \ell_{+} (y) : [-1,1] \mapsto \RR $, 
for which there exists an increasing (pseudo)inverse $\Gamma^{-1}$. 
Using this we broaden the definition of the potential function $\Psi$:
\begin{align*}
\Psi (m) := 
\begin{cases} 
- m + 2 \ell_{-} (-1)  \qquad & \mbox{ \; if \; } m \leq \Gamma (-1) \\ 
\ell_{+} (\Gamma^{-1} (m)) + \ell_{-} (\Gamma^{-1} (m))  \qquad & \mbox{ \; if \; } m \in \lrp{ \Gamma (-1) , \Gamma (1)} \\ 
m + 2 \ell_{+} (1)  & \mbox{ \; if \; } m \geq \Gamma (1)
\end{cases}
\end{align*}

Then we may state the following result, describing the optimal decoding function for a general reconstruction loss. 
\begin{theorem}
Define the potential function 
\begin{align*}
\min_{\vtildex^{(1)}, \dots, \vtildex^{(n)} \in [-1,1]^V} &\; \max_{\substack{ \vx^{(1)}, \dots, \vx^{(n)} \in [-1,1]^V , \\ \forall v \in [V] : \;\frac{1}{n} \vE \vx_v = \vb_v }} 
\; \frac{1}{n} \sum_{i=1}^n \ell (\vx^{(i)} , \vtildex^{(i)} ) \\
&= 
\frac{1}{2} \sum_{v=1}^V 
\min_{ \vw_v \in \RR^H } 
\lrb{ - \vb_v^\top \vw_v + \frac{1}{n} \sum_{i=1}^n \Psi (\vw_v^\top \ve^{(i)} ) }
\end{align*}
For each $v \in [V], i \in [n]$, the minimizing $\vtildex_v^{(i)}$ is a sigmoid function of the encoding $\ve^{(i)}$ with weights equal to the minimizing $\vw_v^*$ above, as in Theorem \ref{thm:decfn}. 
The sigmoid is defined as 
\begin{align}
\label{eq:gipredform}
\vtildex_v^{(i)*} :=  
\begin{cases} 
-1  \qquad & \mbox{ \; if \; } \vw_v^*{^\top} \ve^{(i)} \leq \Gamma (-1) \\ 
\Gamma^{-1} (\vw_v^*{^\top} \ve^{(i)}) \qquad & \mbox{ \; if \; } \vw_v^*{^\top} \ve^{(i)} \in \lrp{ \Gamma (-1) , \Gamma (1)} \\ 
1  & \mbox{ \; if \; } \vw_v^*{^\top} \ve^{(i)} \geq \Gamma (1)
\end{cases}
\end{align}
\end{theorem}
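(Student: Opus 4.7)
The plan is to retrace the proof of Theorem \ref{thm:decfn} verbatim up to the point where the partial loss structure becomes important, and then handle the boundary cases carefully using Assumption \ref{ass:loss} and the (pseudo)inverse $\Gamma^{-1}$.

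First I would introduce dual variables $\vw_v \in \RR^H$ for the $H$ equality constraints $\frac{1}{n} \vE \vx_v = \vb_v$ attached to each visible bit $v$, rewriting the constrained inner maximum as an unconstrained min-over-$\vw$ / max-over-$\vx$. Because the objective is linear in both $\vx^{(i)}$ and $\vw_v$ and the constraint set for $\vx^{(i)}$ is the compact convex cube $[-1,1]^V$, Sion's minimax theorem (exactly as invoked in step $(a)$ of the proof of Theorem \ref{thm:decfn}) lets me swap the outer $\min_{\tilde{\vX}}$ against the inner $\min_{\vw}$, and collapse the weighted sum of partial losses using the identity $\lrp{\frac{1+x}{2}} \ell_+(\tilde{x}) + \lrp{\frac{1-x}{2}} \ell_-(\tilde{x}) = \tfrac{1}{2}(\ell_+(\tilde{x}) + \ell_-(\tilde{x})) - \tfrac{1}{2} x \Gamma(\tilde{x})$. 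This is the only place the specific form of cross-entropy was used in the original proof, so the derivation carries over unchanged.

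Next I would evaluate the inner $\max_{\vx^{(i)}}$ coordinatewise: since the summand is linear in $x_v^{(i)}$ with coefficient $\vw_v^\top \ve^{(i)} - \Gamma(\tilde{x}_v^{(i)})$, the optimum puts $x_v^{(i)} = \sgn(\vw_v^\top \ve^{(i)} - \Gamma(\tilde{x}_v^{(i)}))$, yielding an absolute value. The problem then separates over $(v,i)$, reducing to the one-dimensional minimization
\begin{align*}
\min_{\tilde{x} \in [-1,1]} \; \ell_+(\tilde{x}) + \ell_-(\tilde{x}) + \abs{m - \Gamma(\tilde{x})},
\end{align*}
where $m := \vw_v^\top \ve^{(i)}$. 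Splitting the absolute value gives two cases, $2\ell_+(\tilde{x}) + m$ when $m \geq \Gamma(\tilde{x})$ and $2\ell_-(\tilde{x}) - m$ when $m < \Gamma(\tilde{x})$, which by Assumption \ref{ass:loss} are respectively decreasing and increasing in $\tilde{x}$. Each case therefore pushes $\tilde{x}$ toward the boundary of its feasible region, which in the interior of $\Gamma$'s range is characterized by $\Gamma(\tilde{x}^*) = m$, i.e.\ $\tilde{x}^* = \Gamma^{-1}(m)$, giving minimand value $\ell_+(\Gamma^{-1}(m)) + \ell_-(\Gamma^{-1}(m))$, matching the middle branch of $\Psi$.

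The main obstacle, and the only genuine novelty beyond Theorem \ref{thm:decfn}, is handling the two boundary regimes $m \leq \Gamma(-1)$ and $m \geq \Gamma(1)$, which did not arise for cross-entropy since there $\Gamma$ is surjective onto $\RR$. When $m \geq \Gamma(1)$, the constraint $\Gamma(\tilde{x}) \leq m$ is satisfied by all $\tilde{x} \in [-1,1]$, so only the first case is active; monotonicity forces $\tilde{x}^* = 1$ with minimand value $2\ell_+(1) + m$, and the symmetric argument gives $\tilde{x}^* = -1$ and minimand $2\ell_-(-1) - m$ in the other extreme. This reproduces the three-piece definition of $\Psi$ and the clipped form \eqref{eq:gipredform} of the optimal reconstruction. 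Assembling the per-$(v,i)$ optima back into the outer sum and pulling the $\min_{\vw_v}$ through the sum over $v$ (legal because the $v$-th summand only depends on $\vw_v$) yields the displayed formula. A brief check that $\Psi$ is continuous at $m = \Gamma(\pm 1)$ — which follows from $\Gamma(1) = \ell_-(1) - \ell_+(1)$ and the analogous identity at $-1$ — confirms the three pieces glue into a single convex function, so the outer optimization in $\vw_v$ remains a tractable convex program identical in structure to the one in Algorithm \ref{alg:aerealalg}.
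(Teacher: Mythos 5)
Your proof is correct and takes essentially the same route as the paper, which handles this theorem by simply recapitulating the proof of Theorem \ref{thm:decfn} (following the general-loss argument of \cite{BF16}) with the broader definitions of $\Gamma$ and $\Psi$; your recapitulation, including the two clipped regimes $\vw_v^\top \ve^{(i)} \leq \Gamma(-1)$ and $\vw_v^\top \ve^{(i)} \geq \Gamma(1)$ that are vacuous for cross-entropy, matches the paper's intended argument and the three-branch forms of $\Psi$ and \eqref{eq:gipredform}. (Two minor slips that do not affect the stated equality: the minimax theorem is what licenses exchanging $\max_{\vX}$ with $\min_{\vW}$, not the two minimizations, which commute trivially; and your closing claim that $\Psi$ is convex does not follow from Assumption \ref{ass:loss} alone---monotone but non-convex partial losses can make the middle branch of $\Psi$ concave---though convexity is nowhere needed for the theorem itself.)
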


The proof is nearly identical to that of the main theorem of \cite{BF16}. 
That proof is essentially recapitulated here for each bit $v \in [V]$ due to the additive decomposability of the loss, 
through algebraic manipulations (and one application of the minimax theorem) identical to the proof of Theorem \ref{thm:decfn} 
with the more general definitions of $\Psi$ and $\Gamma$. 
So we do not rewrite it in full here.

A notable special case of interest is the Hamming loss, for which $\ell_{\pm} (\tilde{x}_v^{(i)}) = \frac{1}{2} \lrp{1 \mp \tilde{x}_v^{(i)} }$, 
where the reconstructions are allowed to be randomized binary values. 
In this case, we have $\Psi (m) = \max(\abs{m}, 1)$, and the sigmoid used for each decoding neuron is the clipped linearity 
$\max(-1, \min(\vw_v^*{^\top} \ve^{(i)}, 1))$.

\section{Alternate Approaches}

We made some technical choices in the derivation of $\algname$, 
which prompt possible alternatives not explored here for a variety of reasons. 
Recounting these choices better explains our framework.

The output reconstructions could have restricted pairwise correlations, i.e. $\frac{1}{n} \tilde{\vX} \vE^\top = \vB$. 
One option is to impose such restrictions \emph{instead} of the existing constraints on $\vX$, leaving $\vX$ unrestricted. 
However, this is not in the spirit of this paper, because $\vB$ is our means of indirectly conveying information 
to the decoder about how $\vX$ is decoded.

Another option is to restrict both $\tilde{\vX}$ and $\vX$. 
This is possible and may be useful in propagating correlation information between layers of deeper architectures while learning, 
but its minimax solution does not have the conveniently clean structure of the $\algname$ derivation. 

In a similar vein, we could restrict $\vE$ during the encoding phase, using $\vB$ and $\vX$. 
As $\vB$ is changed only during this phase to better conform to the true data $\vX$, 
this tactic fixes $\vB$ during the optimization, which is not in the spirit of this paper's approach. 
It also performed significantly worse in our experiments.

%===================================================================================================================
%===================================================================================================================

%===================================================================================================================
%===================================================================================================================

\newpage

\end{document}